\def\eqref#1{(\ref{#1})}
\def\1{\bm{1}}
\def\cL{{\mathcal{L}}}
\def\cU{{\mathcal{U}}}
\newcommand{\va}{{\mathbf{a}}}
\newcommand{\vx}{{\mathbf{x}}}
\newcommand{\vz}{{\mathbf{z}}}
\newcommand{\vA}{{\mathbf{A}}}
\newcommand{\vX}{{\mathbf{X}}}
\newcommand{\vZ}{{\mathbf{Z}}}
\def\va{{\bm{a}}}
\def\vx{{\bm{x}}}
\def\vz{{\bm{z}}}
\DeclareMathAlphabet{\mathsfit}{\encodingdefault}{\sfdefault}{m}{sl}
\SetMathAlphabet{\mathsfit}{bold}{\encodingdefault}{\sfdefault}{bx}{n}
\theoremstyle{plain}
\newtheorem{theorem}{Theorem}[section]
\newtheorem{lemma}[theorem]{Lemma}
\theoremstyle{definition}
\theoremstyle{remark}
\title{
Robustness Reprogramming \\ for Representation Learning
}
\author{
Zhichao Hou\textsuperscript{\rm 1}, 
MohamadAli Torkamani\textsuperscript{\rm 2}, 
Hamid Krim\textsuperscript{\rm 1}, 
Xiaorui Liu\textsuperscript{\rm 1} \\
$\;$\textsuperscript{\rm 1}North Carolina State University \\
$\;$\textsuperscript{\rm 2}Amazon Web Services \\
$\;$\texttt{zhou4@ncsu.edu}, \texttt{alitor@amazon.com}, \texttt{ahk@ncsu.edu}, \texttt{xliu96@ncsu.edu} \\
}
\begin{document}

\maketitle

\begin{abstract}

This work tackles an intriguing and fundamental open challenge in representation learning: Given a well-trained deep learning model, can it be reprogrammed to enhance its robustness against adversarial or noisy input perturbations without altering its parameters?
To explore this, we revisit the core feature transformation mechanism in representation learning and propose a novel non-linear robust pattern matching technique as a robust alternative. Furthermore, we introduce three model reprogramming paradigms to offer flexible control of robustness under different efficiency requirements. Comprehensive experiments and ablation studies across diverse learning models ranging from basic linear model and MLPs to shallow and modern deep ConvNets demonstrate the effectiveness 
of our approaches.
This work not only opens a promising and orthogonal direction for improving adversarial defenses in deep learning beyond existing methods but also provides new insights into designing more resilient AI systems with robust statistics.

\end{abstract}

\section{Introduction}

Deep neural networks (DNNs) have made significant impacts across various domains due to their powerful capability of learning representation from high-dimensional data~\citep{lecun2015deep, goodfellow2016deep}.
However, it has been well-documented that DNNs are highly vulnerable to adversarial attacks~\citep{szegedy2013intriguing,biggio2013evasion}. 
These vulnerabilities are prevalent across various model architectures, attack capacities, attack knowledge, data modalities, and prediction tasks, which 
hinders their reliable deployment in real-world applications due to potential economic, ethical, and societal risks~\citep{ai2023artificial}.

In light of the burgeoning development of AI, the robustness and reliability of deep learning models have become increasingly crucial and of particular interest. A mount of  endeavors attempting to safeguard DNNs have demonstrated promising robustness, including robust training~\citep{madry2017towards,zhang2019theoretically,gowal2021improving,li2023wat}, regularization~\citep{cisse2017parseval,zheng2016improving}, and purification techniques~\citep{ho2022disco,nie2022diffusion,shi2021online,yoon2021adversarial}. However, these methods often suffer from catastrophic pitfalls like cumbersome training processes or domain-specific heuristics, failing to deliver the desired robustness gains in an efficient and adaptable manner.

Despite numerous advancements in adversarial defenses, an open challenge persists: \emph{Is it possible to reprogram a well-trained model to achieve the desired robustness without modifying its parameters?} This question is of particular significance in the current era of large-scale models. Reprogramming without training is highly efficient, as the pretraining-and-finetuning paradigm grants access to substantial open source pre-trained parameters, eliminating the need for additional training.  Moreover, reprogramming offers an innovative, complementary, and orthogonal approach to existing defenses, paving the way to reshape the landscape of robust deep learning. 

To address this research gap, we firstly delve deeper into the basic neurons of DNNs to investigate the fundamental mechanism of representation learning. At its core, 
the linear feature transformations serve as an essential building block to capture particular feature patterns of interest in most deep learning models. For instance, the Multi-Layer Perceptron (MLP) fundamentally consists of multiple stacked linear mapping layers and activation functions; the convolution operations in Convolution Neural Networks (CNNs)~\citep{he2016deep} execute a linear feature mapping over local patches using the convolution kernels; the attention mechanism in Transformers~\citep{vaswani2017attention} 
performs linear transformations over the contextualized token vectors. This linear feature mapping functions as \emph{Linear Pattern Matching} by capturing the certain patterns that are highly correlated with the model parameters. 
However, this pattern matching manner is highly sensitive to data perturbations, 
which explains the breakdown of the deep learning models under the adversarial environments.

To this end, we propose a novel approach, \emph{Nonlinear Robust Pattern Matching}, which significantly improves the robustness while maintaining the feature pattern matching behaviors. Furthermore, we also introduce a flexible and efficient strategy, \emph{Robustness Reprogramming}, which can be deployed under three paradigms to improve the robustness, accommodating varying resource constraints and robustness requirements.
 This innovative framework promises to redefine the landscape of robust deep learning, 
 paving the way for enhanced resilience against adversarial threats.

Our contributions can be summarized as follows:
(1) We propose a new perspective on representation learning by formulating \emph{Linear Pattern Matching} (the fundamental mechanism of feature extraction in deep learning) as ordinary least-square problems; 
(2) Built upon our novel viewpoint, we introduce \emph{Nonlinear Robust Pattern Matching} as an alternative robust operation and provide theoretical convergence and robustness guarantees for its effectiveness; 
(3) We develop an innovative and adaptable strategy, \emph{Robustness Reprogramming}, which includes three progressive paradigms to enhance the resilience of given pre-trained models;  
and (4) We conduct comprehensive experiments to demonstrate the effectiveness of our proposed approaches across various backbone architectures, using multiple evaluation methods and providing several insightful analyses.

\section{Related Works}

\textbf{Adversarial Attacks.} 
Adversarial attacks can generally be categorized into two types: white-box and black-box attacks. In white-box attacks, the attacker has complete access to the target neural network, including its architecture, parameters, and gradients. Examples of such attacks include gradient-based methods like FGSM~\citep{goodfellow2014explaining}, DeepFool~\citep{moosavi2016deepfool}, PGD~\citep{madry2017towards}, and C\&W attacks~\citep{carlini2017towards}. On the other hand, black-box attacks do not have full access to the model's internal information; the attacker can only use the model’s input and output responses. Examples of black-box methods include surrogate model-based method~\citep{papernot2017practical}, zeroth-order optimization~\citep{chen2017zoo}, and query-efficient methods\citep{andriushchenko2020square, alzantot2019genattack}. Additionally, AutoAttack~\citep{croce2020reliable}, an ensemble attack that includes two modified versions of the PGD attack, a fast adaptive boundary attack~\citep{croce2020minimally}, and a black-box query-efficient square attack~\citep{andriushchenko2020square}, has demonstrated strong performance and is often considered as a reliable benchmark for evaluating adversarial robustness.

\textbf{Adversarial Defenses.} 
Numerous efforts have been made to enhance the robustness of deep learning models, which can broadly be categorized into empirical defenses and certifiable defenses. Empirical defenses focus on increasing robustness through various strategies: robust training methods~\citep{madry2017towards,zhang2019theoretically,gowal2021improving,li2023wat} introduce adversarial perturbations into the training data, while regularization-based approaches~\citep{cisse2017parseval,zheng2016improving} stabilize models by constraining the Lipschitz constant or spectral norm of the weight matrix. Additionally, detection techniques~\cite{metzen2017detecting,feinman2017detecting,grosse2017statistical} aim to defend against attacks by identifying adversarial inputs.
Purification-based approaches seek to eliminate the adversarial signals before performing downstream tasks~\citep{ho2022disco,nie2022diffusion,shi2021online,yoon2021adversarial}. Recently, some novel approaches have emerged by improving robustness from the perspectives of ordinary differential equations~\citep{kang2021stable,li2022defending,yan2019robustness} and generative models~\citep{wang2023better,nie2022diffusion,rebuffi2021fixing}. Beyond empirical defenses, certifiable defenses~\citep{cohen2019certified, gowal2018effectiveness, fazlyab2019efficient} offer theoretical guarantees of robustness within specific regions against any attack. 
However, many of these methods suffer from significant overfitting issues or depend on domain-specific heuristics, which limit their effectiveness and adaptability in achieving satisfying robustness. Additionally, techniques like robust training often entail high computational and training costs, especially when dealing with diverse noisy environments, thereby limiting their scalability and flexibility for broader applications. 
The contribution of robustness reprogramming in this work is fully orthogonal to existing efforts, and they can be integrated for further enhancement.

\section{Robust Nonlinear Pattern Matching}

In this section, we begin by exploring the vulnerability of
representation learning 
from the perspective of 
pattern matching
and subsequently introduce a novel robust feature matching  in Section~\ref{sec:represenation_learning}. Following this, we develop a Newton-IRLS algorithm, which is unrolled into robust layers in Section~\ref{sec:algo_develop}.
Lastly, we present a theoretical robustness analysis of this architecture in Section~\ref{sec:theoretical_robustness_analysis}.

\textbf{Notation.} 
Let the input features of one instance be represented as $\vx = \left(x_1, \dots, x_D\right)^\top \in \mathbb{R}^D$, and the parameter vector as $\va = \left(a_1, \dots, a_D\right)^\top \in \mathbb{R}^D$, where $D$ denotes the feature dimension. 
For simplicity, we describe our method in the case where the output is one-dimensional, i.e., $z \in \mathbb{R}$.

\subsection{A New Perspective of Representation Learning}
\label{sec:represenation_learning}

\begin{figure}[h!]
    \centering
\includegraphics[width=0.9\linewidth]{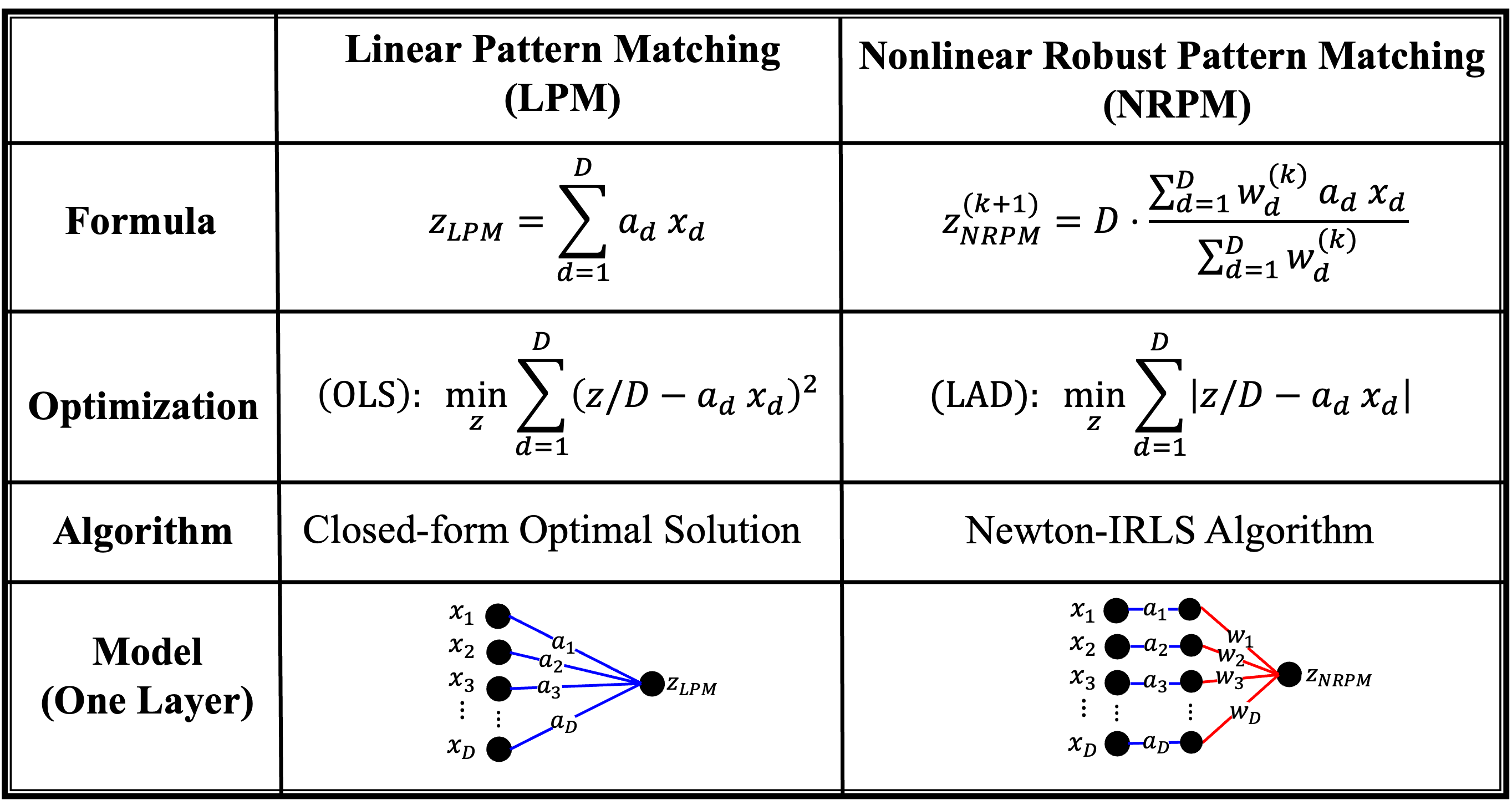}
    \caption{
    Vanilla Linear Pattern Matching (LPM) vs. Nonlinear Robust Pattern Matching (NRPM). 
    }
\label{fig:model_comparison}
\end{figure}

Fundamentally, DNNs inherently function as representation learning modules by transforming raw data into progressively more compact embeddings~\citep{lecun2015deep, goodfellow2016deep}.
The linear feature transformation, $z=\va^\top\vx=\sum_{d=1}^Da_d\cdot x_d$, is the essentially building block of deep learning models to 
capture particular feature patterns of interest. Specifically, a certain pattern $\vx$ can be captured and matched once it is highly correlated with the model parameter $\va$.

Despite the impressive capability of linear operator in enhancing the representation learning of DNNs, the vanilla deep learning models have been validated highly vulnerable~\citep{szegedy2013intriguing,biggio2013evasion}. Existing approaches including robust training and regularization techniques~\citep{madry2017towards, cisse2017parseval,zheng2016improving} aim to improve the robustness of feature transformations by 
constraining the parameters $\va$ with particular properties. However, these methods inevitably alter the feature matching behaviors, often leading to clean performance degradation without necessarily achieving improved robustness.

Different from existing works, we aim to introduce a novel perspective by exploring how to design an alternative feature mapping that enhances robustness while maximally preserving feature matching behaviors.
First, we 
formulate the linear feature pattern matching as the optimal closed-form solution of the Ordinary Least Squares (OLS) problem:
\[
\min_{z\in\mathbb{R}} \mathcal{L}(z)= \sum_{d=1}^{D} \left(\frac{z}{D} - a_d \cdot x_d\right)^2,
\]
where the first-order optimality condition $\frac{\partial\cL(z)}{\partial z}=0$ yields the linear transformation $z^* = \sum_{d=1}^{D} a_d \cdot x_d$.
Since OLS estimation is highly sensitive to outlying values due to the quadratic penalty, we propose to derive a robust alternative 
inspired by the Least Absolute Deviation (LAD) estimation:
\begin{equation}
    \min_{z\in\mathbb{R}}\mathcal{L}(z) = \sum_{d=1}^{D} \left|\frac{z}{D} - a_d \cdot x_d\right|.
    \label{eq:robust_estimation}
\end{equation}
By replacing the quadratic penalty with a linear alternative on the residual $z/D - a_d \cdot x_d$, the impact of outliers can be significantly mitigated according to robust statistics~\citep{huber2011robust}.

\subsection{Algorithm Development and Analysis}
\label{sec:algo_develop}


Although the LAD estimator offers robustness implication, the non-smooth objective in Eq.~\eqref{eq:robust_estimation} poses a challenge in designing an efficient algorithm to be integrated neural network layers. 
To this end, 
we leverage the Newton Iterative Reweighted Least Squares (Newton-IRLS) algorithm to  address the non-smooth nature of the absolute value operator $|\cdot|$ by optimizing an alternative smoothed objective function $\cU$ with Newton method. In this section, we will first introduce the localized upper bound $\cU$ for $\cL$ in Lemma~\ref{lemma:local_upper_bound}, and then derive the Newton-IRLS algorithm to optimize $\cL$.

\begin{lemma} 
\label{lemma:local_upper_bound}
Let $\cL(z)$ be defined in  Eq.~\eqref{eq:robust_estimation}, and for any fixed point $z_0$,  $\cU(z,z_0)$ is defined as 
\begin{equation}
    \label{eq:upper_bound}
    \cU(z,z_0)=\sum_{d=1}^Dw_d\cdot (a_dx_d-z/D)^2+\frac{1}{2}\cL(z_0),
\end{equation}
where $w_d=\frac{1}{2|a_dx_d-z_0/D|}.$
Then, for any $z$, the following holds:
$$(1)\; \cU(z,z_0) \geq \cL(z), \quad (2)\; \cU(z_0,z_0) = \cL(z_0).$$

\end{lemma}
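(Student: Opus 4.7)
The plan is to prove both claims by working term-by-term in the sum, since both $\cL$ and $\cU$ split cleanly across the index $d$. Write $r_d(z) := a_d x_d - z/D$ and $r_d^0 := a_d x_d - z_0/D$ so that $\cL(z) = \sum_d |r_d(z)|$, $\cU(z,z_0) = \sum_d \frac{r_d(z)^2}{2|r_d^0|} + \tfrac{1}{2}\sum_d |r_d^0|$, and the two claims reduce to pointwise inequalities on each residual coordinate.

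For claim (2), I would simply substitute $z = z_0$: the first sum in $\cU$ becomes $\sum_d \frac{(r_d^0)^2}{2|r_d^0|} = \tfrac{1}{2}\sum_d |r_d^0| = \tfrac{1}{2}\cL(z_0)$, which added to the constant $\tfrac{1}{2}\cL(z_0)$ gives exactly $\cL(z_0)$. This is a one-line check.

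For claim (1), the key fact is the elementary scalar inequality
\[
\frac{s^2}{2t} + \frac{t}{2} \;\geq\; |s| \qquad \text{for all } s \in \mathbb{R},\; t > 0,
\]
which follows from $(|s| - t)^2 \geq 0$ expanded as $s^2 + t^2 \geq 2|s|t$ and divided by $2t$. Applying this with $s = r_d(z)$ and $t = |r_d^0|$ (assuming $r_d^0 \neq 0$, which is implicit in the definition of $w_d$) and summing over $d$ yields exactly $\cU(z,z_0) \geq \cL(z)$. Equality holds coordinatewise iff $|r_d(z)| = |r_d^0|$, consistent with the tightness at $z = z_0$ from claim (2).

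The only subtlety worth flagging is the degenerate case $r_d^0 = 0$, where $w_d$ is undefined; in practice one either excludes such indices, works with the convention $1/0 = +\infty$ (so the $d$-th term of $\cU$ forces $r_d(z) = 0$ and the inequality is vacuous), or introduces a small smoothing $|r| \mapsto \sqrt{r^2 + \eps^2}$ as is standard in IRLS. I would add a one-sentence remark acknowledging this, but otherwise there is no real obstacle: this lemma is the standard quadratic majorization of $|\cdot|$ underlying the IRLS/MM framework, and the proof is essentially the AM--GM inequality applied coordinatewise.
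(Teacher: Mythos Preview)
Your proof is correct and essentially identical to the paper's: the paper also works term-by-term, invoking the scalar inequality $\sqrt{a} \leq \frac{a}{2\sqrt{b}} + \frac{\sqrt{b}}{2}$ (which, with $a = r_d(z)^2$ and $b = (r_d^0)^2$, is exactly your $(|s|-t)^2 \geq 0$ inequality), then summing over $d$. Your version is slightly more careful in that you explicitly flag the $r_d^0 = 0$ degeneracy, which the paper silently ignores.
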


\begin{proof}
    Please refer to Appendix~\ref{sec:proof_local_upper_bound}.
\end{proof}
The statement (1) indicates that $\cU(z,z_0)$ serves as an upper bound for $\cL(z)$, while statement (2) demonstrates that  $\cU(z,z_0)$ equals $\cL(z)$ at point $z_0$.
With fixed $z_0$, the alternative objective $\cU(z,z_0)$ in Eq.~\eqref{eq:upper_bound} is quadratic and can be efficiently optimized.
Therefore, instead of minimizing the non-smooth $\cL(z)$ directly, the Newton-IRLS algorithm will obtain $z^{(k+1)}$ by optimizing the quadratic upper bound $\cU(z,z^{(k)})$ with second-order Newton method:
\begin{align}
    z^{(k+1)}
    &=D\cdot \frac{\sum_{d=1}^D  w^{(k)}_d a_dx_d}{\sum_{d=1}^D  w^{(k)}_d}
    \label{eq:reweight}
\end{align}
where $ w^{(k)}_d=\frac{1}{|a_dx_d-z^{(k)}/D|}.$ 
Please refer to Appendix~\ref{sec:derivation_irls_algo} for detailed derivation.
As a consequence of Lemma~\ref{lemma:local_upper_bound}, we can conclude the iteration $\{z^{(k)}\}_{k=1}^{K}$ fulfill the loss descent of $\cL(z)$:
$$\cL(z^{(k+1)})\leq\cU(z^{(k+1)},z^{(k)})\leq\cU(z^{(k)}, z^{(k)})=\cL(z^{(k)}).$$
This implies 
Eq.~\eqref{eq:reweight} can achieve 
convergence of $\cL$ by optimizing the localized upper bound $\cU$. 


\textbf{Implementation.} 
The proposed non-linear feature pattern matching is expected to improve the robustness against data perturbation in any deep learning models by replacing the vanilla linear feature transformation. In this paper, we illustrate its use cases through MLPs and convolution models. We provide detailed implementation techniques 
in Appendix~\ref{sec:implementation_robust_arch}. 
Moreover, we will demonstrate how to leverage this technique for robustness reprogramming for representation learning in Section~\ref{sec:robustness_reprogramming}.





\subsection{Theoretical Robustness Analysis}
\label{sec:theoretical_robustness_analysis}  
In this section, we conduct a theoretical robustness analysis comparing the vanilla Linear Pattern Matching (LPM) architecture with our  Nonlinear Robust Pattern Matching (NRPM) based on influence function~\citep{law1986robust}. 
For simplicity, we consider a single-step case for our Newton-IRLS algorithm ($K=1$). 
Denote the  feature distribution as $F(X)=\frac{1}{D}\sum_{d=1}^D\mathbb{I}_{\{X=a_dx_d\}}$.
Then we can represent
LPM as $z_{LPM}:= T_{LPM}(F)= \sum_{d=1}^D a_d x_d$ and  NRPM as $z_{NRPM}:= T_{NRPM}(F) = D \cdot \frac{\sum_{d=1}^D w_d a_d x_d}{\sum_{d=1}^D w_d},$ where $w_d = \frac{1}{|a_d x_d -  z_{LPM}/ D|}$.

We derive 
their influence functions 
in Theorem~\ref{thm:influence_function} to demonstrate their sensitivity against input perturbations, with a proof presented in Appendix~\ref{sec:proof_influecen_function}.

\begin{theorem}[Robustness Analysis via Influence Function]
\label{thm:influence_function} 

The influence function is defined as the sensitivity of the estimate to a small contamination at \( x_0 \):
\[
IF(x_0; T,F) = \lim_{\epsilon \to 0} \frac{T(F_\epsilon) - T(F)}{\epsilon}
\]
where the contaminated distribution becomes $F_\epsilon=(1-\epsilon)F+\epsilon \delta_{x_0}$, where \( \delta_{x_0} \) is the Dirac delta function centered at \( x_0 \) and $F$ is the distribution of $x$. Then, we have:
\[
IF(x_0; T_{LPM}, F) =  D(x_0 - y)=D(x_0-z_{LPM}/D),
\] and 
\[
IF(x_0; T_{NRPM}, F) 
=\frac{ Dw_0 \left(x_0-z_{NRPM}/D
\right)  }{ \sum_{d=1}^{D} w_d}\text{ where } w_0=\frac{1}{|x_0-z_{LPM}/D|}.\]
\end{theorem}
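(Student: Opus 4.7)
The plan is to apply the influence function definition directly to both functionals, viewing each as (a ratio of) expectations under the empirical distribution $F$. For LPM, note that $T_{LPM}(F) = D\,\mathbb{E}_F[X]$, so by linearity of expectation $T_{LPM}(F_\epsilon) = (1-\epsilon)z_{LPM} + D\epsilon x_0$; subtracting $z_{LPM}$, dividing by $\epsilon$, and letting $\epsilon\to 0$ immediately gives $D(x_0 - z_{LPM}/D)$.

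For NRPM, I would write $T_{NRPM}(F) = D \cdot N(F)/M(F)$ with $N(F) = \mathbb{E}_F[w_F(X)X]$, $M(F) = \mathbb{E}_F[w_F(X)]$, and $w_F(x) = 1/|x - T_{LPM}(F)/D|$, and then expand
$$N(F_\epsilon) = (1-\epsilon)\cdot\frac{1}{D}\sum_{d=1}^D g_d(\epsilon)\,a_d x_d + \epsilon\,g_0(\epsilon)\,x_0,$$
with $g_d(\epsilon) = 1/|a_d x_d - T_{LPM}(F_\epsilon)/D|$ (and $g_0$ analogous at $x_0$), together with the companion expansion for $M(F_\epsilon)$. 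The quotient rule combined with $N(0)/M(0) = z_{NRPM}/D$ gives
$$IF(x_0;T_{NRPM},F) = \frac{D}{M(0)}\!\left[N'(0) - \frac{z_{NRPM}}{D}\,M'(0)\right].$$
Differentiating the expansions by the product rule produces three groups of terms: (i) ``removal'' terms $-\frac{1}{D}\sum_d w_d[\,\cdot\,]$; (ii) ``contamination'' terms $w_0[\,\cdot\,]$; and (iii) weight-derivative terms $\frac{1}{D}\sum_d g_d'(0)[\,\cdot\,]$, with each $g_d'(0)$ obtained by the chain rule from the LPM influence result of the first step.

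The key algebraic move is the first-order optimality condition $\sum_d w_d(a_d x_d - z_{NRPM}/D) = 0$, which holds because $z_{NRPM}$ minimizes the quadratic upper bound $\cU(z, z_{LPM})$ from Lemma~\ref{lemma:local_upper_bound}. Assembling $N'(0) - (z_{NRPM}/D)\,M'(0)$, the removal group collapses to $-\frac{1}{D}\sum_d w_d(a_d x_d - z_{NRPM}/D)$ and vanishes by this identity, while the contamination group leaves $w_0(x_0 - z_{NRPM}/D)$; substituting $M(0) = \frac{1}{D}\sum_d w_d$ and multiplying by $D/M(0)$ yields the claimed form.

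The main obstacle is handling group (iii): because the weights depend implicitly on $F_\epsilon$ through $T_{LPM}(F_\epsilon)$, the derivatives $g_d'(0)$ are nonzero and, after using the chain rule with the LPM influence function, contribute terms involving $\mathrm{sign}(a_d x_d - z_{LPM}/D)\,w_d^2\,(x_0 - z_{LPM}/D)$. Showing that the regrouped sum $\frac{1}{D}\sum_d g_d'(0)(a_d x_d - z_{NRPM}/D)$ is also negligible---either by a further cancellation or by adopting a ``plug-in'' convention that freezes the weights at their values under the unperturbed $F$ (as is standard in one-step M-estimator IF analyses)---is the technical crux of the argument. The optimality identity for $z_{NRPM}$ is the lever that reduces an otherwise intricate quotient-rule expansion to the compact stated form.
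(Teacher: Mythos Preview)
Your approach is correct, and your LPM computation matches the paper's exactly. For NRPM, your systematic quotient-rule expansion is more careful than the paper's proof, which sidesteps your group (iii) entirely by adopting upfront precisely the plug-in convention you mention: the paper states at the outset that ``since $\epsilon$ is very small and $D$ is large enough, we can assume $|a_dx_d-y|\approx|a_dx_d-y_\epsilon|$ for simplicity,'' thereby freezing the weights $w_d$ at their unperturbed values. With that assumption the contaminated estimate is simply
\[
z_\epsilon = \frac{(1-\epsilon)\sum_{d} w_d\, a_d x_d + \epsilon\, w_0\, x_0}{(1-\epsilon)\sum_{d} w_d + \epsilon\, w_0},
\]
and the paper computes $z_\epsilon - z$ by brute-force common-denominator algebra rather than via the quotient rule; the cancellation that occurs is exactly the one your optimality identity $\sum_d w_d(a_d x_d - z_{NRPM}/D)=0$ encodes. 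Your route has the virtue of isolating explicitly where the approximation enters (group (iii)), whereas the paper's direct computation is shorter but obscures the fact that the result is a plug-in influence function for the one-step estimator, not the full influence function accounting for the weights' implicit dependence on $T_{LPM}(F_\epsilon)$. There is no ``further cancellation'' in group (iii) to hope for in general; the plug-in convention is the operative assumption, and you have identified it correctly.
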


Theorem~\ref{thm:influence_function} provide several insights into the robustness of LPM and NRLPM models:
\begin{itemize}[left=0.0em]
    \item For LPM, the influence function is given by $D(x_0 - y)=D(x_0-z_{LPM}/D)$, indicating that the sensitivity of LPM depends on the difference between the perturbation $x_0$ and the average clean estimation $z_{LPM}/D$.
    \item For NRPM, the influence function is $\frac{ Dw_0 \left(x_0-z_{NRPM}/D
\right)  }{ \sum_{d=1}^{D} w_d}$, where $w_0 = \frac{1}{|x_0-z_{LPM}/D|}$. Although the robustness of NRPM is affected by the difference $x_0-z_{NRPM}/D$, the influence can be significantly mitigated by the weight $w_0$, particularly when $x_0$ deviates from the average estimation $z_{LPM}/D$ of 
the clean data.
\end{itemize}

These analyses provide insight and explanation for the robustness nature of the proposed technique.

\section{Robustness Reprogramming}
\label{sec:robustness_reprogramming}

In this section, it is ready to introduce the robustness programming techniques based on the non-linear robust pattern matching (NRPM) derived in Section~\ref{sec:represenation_learning}. One naive approach is to simply replace the vanilla linear pattern matching (LPM) with NRPM. However, this naive approach does not work well in practice, and we propose three robustness reprogramming paradigms to improve the robustness, accommodating varying resource constraints and robustness requirements.

\begin{wrapfigure}{r}{0.51\textwidth}
\vspace{-0.7cm}
\begin{minipage}{0.50\textwidth}
\begin{algorithm}[H]
\caption{Hybrid Architecture}
\label{alg:hybird_arch}
\begin{algorithmic}

\Require  
$\{x_d\}_{d=1}^D$, $\{a_d\}_{d=1}^D$, $\lambda$.

\State Initialize $z^{(0)}_{NRPM} = z_{LPM}=\sum_{d=1}^D a_d\cdot x_d$

\For{$k = 0, 1,\dots, K-1$}
\State $ w_d^{(k)} = \frac{1}{|a_d x_d - z^{(k)}_{NRPM} / D|} 
\forall~ d \in [D]$ 
\State $z^{(k+1)}_{NRPM}
    =D\cdot \frac{\sum_{d=1}^D  w^{(k)}_d a_dx_d}{\sum_{d=1}^D  w^{(k)}_d}$
    
\EndFor

\hspace{-0.8cm} \Return $\lambda \cdot z_{LPM} + (1 - \lambda) \cdot z^{(K)}_{NRPM}$
\end{algorithmic}
\end{algorithm}

\end{minipage}

\vspace{-0.1in}
\end{wrapfigure}

While NRPM enhances model robustness, it can lead to a decrease in clean accuracy, particularly in deeper models. This reduction in performance may be due to the increasing estimation error between OLS and LDA estimations across layers. To balance the clean-robustness performance trade-off between LPM and NRPM, it is necessary to develop a hybrid architecture as shown in Algorithm~\ref{alg:hybird_arch}, where their balance is controlled by hyperparameters $\{\lambda\}$.
Based on this hybrid architecture, we propose three robustness reprogramming paradigms
as shown in Figure~\ref{fig:three_paradigm}:

\begin{figure}[h!]
    \centering
    \includegraphics[width=0.8\linewidth]{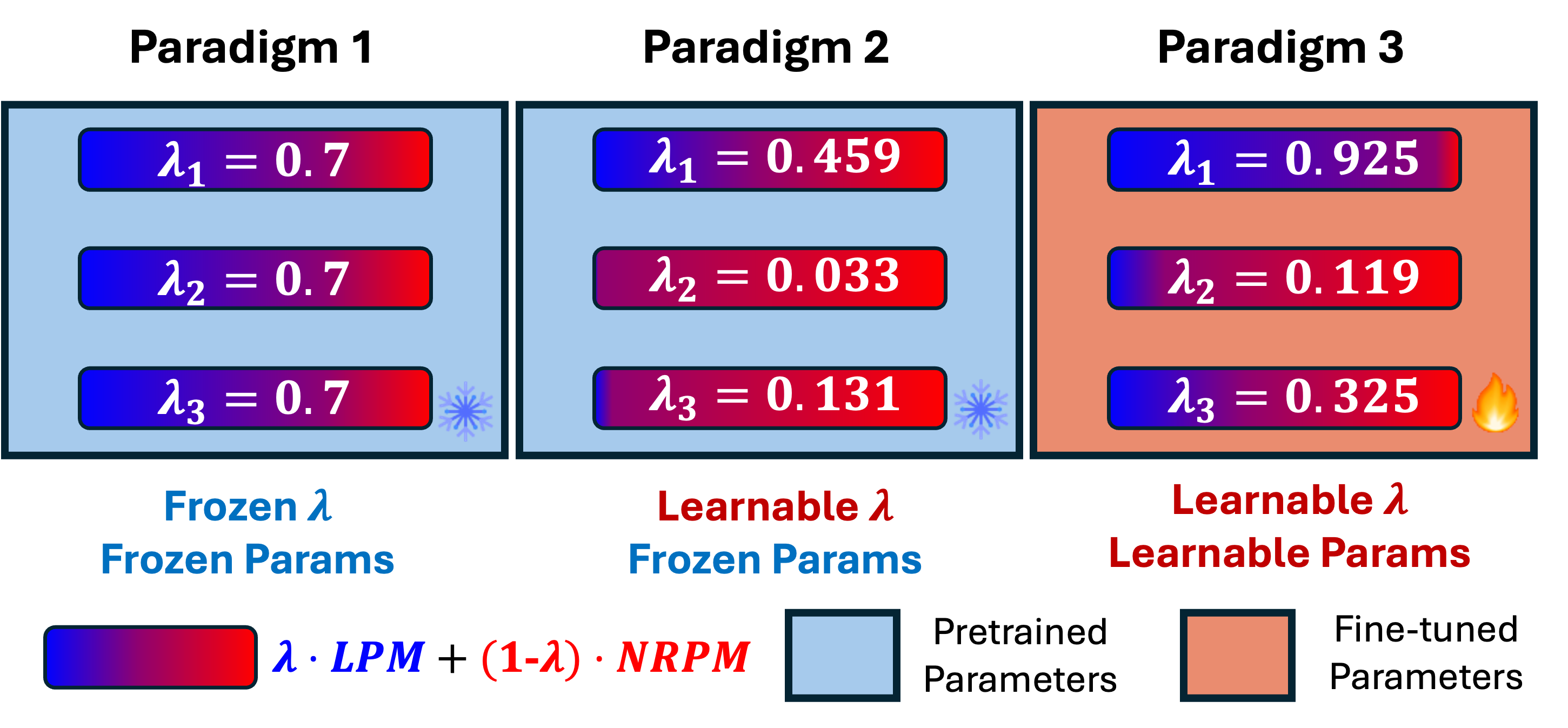}
    \vspace{-0.1in}
    \caption{
    \emph{Three Robustness Reprogramming Paradigms} : (1) Paradigm 1  freezes the model parameters and treats $\{\lambda\}$ as fixed hyperparameter; (2) Paradigm 2 freezes the model parameters but allows $\{\lambda\}$ to be learnable; (3) Paradigm 3 enables both the model parameters and $\{\lambda\}$ to be learnable.
    }
    \label{fig:three_paradigm}
    \vspace{-0.2in}
\end{figure}

\textbf{Paradigm 1: without fine-tuning, good robustness with zero cost.}
As deep learning models become increasingly larger, it is critical to fully utilize pre-trained model parameters. Since the robust NRPM slightly refines the vanilla LPM through an adaptive instance-wise reweighting scheme, ensuring that the pre-trained LPM parameters still fit well within NRPM architecture. 
Additionally, by adjusting the hyperparameters $\{\lambda\}$ with pre-trained parameters, we can strike an ideal balance between natural and robust accuracy.
It is worth noting that relying solely on pre-trained parameters with plug-and play paradigm significantly reduces computational costs, which is crucial in the era of large-scale deep learning.

\textbf{Paradigm 2: only fine-tuning $\{\lambda\}$, strong robustness with slight cost.} One drawback of Paradigm 1 is that we specify the same $\lambda$ for all the layers and need to conduct brute force hyper-parameters search to obtain the optimal one. However, 
hyper-parameters search is time-consuming and computation-intensive. Moreover, the entire model requires layer-wise $\{\lambda\}$ to balance the LPM and NRPM for different layers. To solve it, we propose Paradigm 2, which automatically learns optimal $\lambda$ with light-weight fine-tuning.
This paradigm only fine-tunes the hyperparameters $\{\lambda\}$ while keeping the model parameters frozen, which is very efficient.

\textbf{Paradigm 3: overall fine-tuning, superior robustness with acceptable cost.}  
To achieve best robustness, 
we can make both the model parameters $\{a_d\}_{d=1}^D$ and the hyperparameters $\{\lambda\}$ learnable. By refining these parameters based on the pre-trained model, we can prevent clean performance degradation. Moreover, with learnable $\{\lambda\}$ during adversarial training, the model can automatically select the optimal combination to enhance both clean and robust performance. This paradigm is still efficient since it only needs to fine-tune the model lightly with a few training epochs.

\section{Experiment}
In this section, we comprehensively evaluate the effectiveness of our proposed \emph{Robustness Reprogramming} techniques using a wide range of backbone architectures, starting from basic MLPs, progressing to shallow LeNet model, and extending to the widely-used ResNets architecture.

\subsection{Experimental Setting}
\textbf{Datasets.} We conduct the experiments on MNIST~\cite{LeCun2005TheMD}, SVHN~\citep{netzer2011reading}, CIFAR10~\citep{krizhevsky2009learning}, and ImageNet10~\citep{ILSVRC15} datasets. 

\textbf{Backbone architectures.} We select backbones ranging from very basic MLPs with 1, 2, or 3 layers, to mid-level architectures like LeNet, and deeper networks such as ResNet10, ResNet18, and ResNet34~\citep{he2016deep}. In some experiments with ResNets, we chose the narrower version (with the model width reduced by a factor of 8)  for the consideration for computation issue. Additionally, we also choose one popular architecture MLP-Mixer~\citep{tolstikhin2021mlp}. 

\textbf{Evaluation methods.} 
We assess the performance of the models against various attacks, including FGSM~\citep{goodfellow2014explaining}, PGD-20~\citep{madry2017towards}, C\&W~\citep{carlini2017towards}, and AutoAttack~\citep{croce2020reliable}. Among them, AutoAttack is an ensemble attack consisting of three adaptive white-box attacks and one black-box attack, which is considered as a reliable evaluation method to avoid the false sense of security. For basic MLPs, we also examine attacks under different norms, including $L_0$, $L_2$, and $L_\infty$ norms. 
In addition to empirical robustness evaluation, 
we also evaluate certified robustness to further demonstrate the robustness of our proposed architecture. 

\textbf{Hyperparameter setting.}
For the backbone MLPs and LeNet, we train the vanilla models for 50 epochs, followed by fine-tuning for 5 epochs.  
For ResNet backbones, we train the models over 200 epochs with an adjusted learning rate, then fine-tune them for an additional 5 epochs. For all methods, the learning rate is tuned within \{0.5,  0.01, 0.005\} and the weight decay is tune within \{5e-4, 5e-5, 5e-6\}.
In adversarial training, we apply FGSM attack for MNIST and PGD attack with 7 iterations and 2/255 step size for CIFAR10.

\subsection{Robustness Reprogramming on MLPs}

\vspace{-0.1in}
\begin{table}[h!]
\caption{Robustness reprogramming under 3 paradigms on MNIST with 3-layer MLP as backbone.}
\label{tab:linear_mnist_combination_paradigm_comparison}
\vspace{-0.05in}
\begin{center}
\resizebox{1\textwidth}{!}{
\begin{tabular}{c|ccccccccc}
\toprule
 Method / Budget &Natural& 0.05& 0.1& 0.15& 0.2& 0.25& 0.3&$[\lambda_1,\lambda_2,\lambda_3]$&\\
\hline
Normal-train&\textbf{90.8} & 31.8 & 2.6 & 0.0 & 0.0 & 0.0 & 0.0&\textbackslash\\
Adv-train&76.4 & 66.0 & 57.6 & 46.9 & 35.0 & 23.0 & 9.1&\textbackslash\\\hline
&\textbf{90.8} & 31.8 & 2.6 & 0.0 & 0.0 & 0.0 & 0.0&[1.0, 1.0, 1.0]\\
&\textbf{90.8} & 56.6 & 17.9 & 8.5 & 4.6 & 3.0 & 2.3&[0.9, 0.9, 0.9]\\
&90.4 & 67.1 & 30.8 & 17.4 & 10.6 & 6.5 & 4.5&[0.8, 0.8, 0.8]\\
&89.7 & 73.7 & 43.5 & 25.5 & 16.9 & 11.7 & 9.2&[0.7, 0.7, 0.7]\\
Paradigm 1 
&88.1 & 75.3 & 49.0 & 31.0 & 22.0 & 15.5 & 12.4&[0.6, 0.6, 0.6]\\
(without tuning)&84.1 & 74.4 & 50.0 & 31.9 & 22.8 & 18.1 & 14.3&[0.5, 0.5, 0.5]\\
&78.8 & 70.4 & 48.3 & 33.9& 24.1 & 18.4 & 14.6&[0.4, 0.4, 0.4]\\
&69.5 & 62.6 & 45.2 & 31.5 & 23.1 & 19.0 & 15.5&[0.3, 0.3, 0.3]\\
&58.5 & 53.2 & 38.2 & 27.6 & 22.2 & 16.4 & 12.9&[0.2, 0.2, 0.2]\\
&40.7 & 38.3 & 29.7 & 22.8 & 16.8 & 12.9 & 11.1&[0.1, 0.1, 0.1]\\
&18.8 & 17.6 & 16.4 & 14.6 & 12.4 & 10.7 & 9.4&[0.0, 0.0, 0.0]\\
\hline
Paradigm 2 (tuning $\lambda$)
&81.5 & 75.3 & 61.2 & 44.7 & 33.7 & 26.0 & 20.1&[0.459, 0.033, 0.131]
\\\hline
Paradigm 3 (tuning all) &86.1 & \textbf{81.7} & \textbf{75.8} & \textbf{66.7} & \textbf{58.7} & \textbf{50.1} & \textbf{39.8}&[0.925, 0.119, 0.325]\\

\bottomrule
\end{tabular}
}
\end{center}
\end{table}

\textbf{Comparison of robustness reprogramming via various paradigms.} To compare the robustness of our robustness reprogramming under 3 paradigms as  well as the vanilla normal/adversarial training, we evaluate the model performance under FGSM attack across various budgets with 3-layer MLP  as backbone model.
From the results in Table~\ref{tab:linear_mnist_combination_paradigm_comparison}, we can make the following observations:
\begin{itemize}[left=0.0em]
\item In terms of robustness, our Robustness Reprogramming across three paradigms progressively enhances performance. In Paradigm 1, by adjusting $\{\lambda\}$, an optimal balance between clean and robust accuracy can be achieved without the need for parameter fine-tuning. Moreover, Paradigm 2 can automatically learn the layer-wise set  $\{\lambda\}$, improving robustness compared to Paradigm 1 (with fixed $\{\lambda\}$). Furthermore, Paradigm 3, by fine-tuning all the parameters, demonstrates the best performance among all the methods compared.

\item Regarding natural accuracy, we can observe from Paradigm 1 that increasing the inclusion of NRLM (i.e., smaller $\{\lambda\}$) results in a decline in performance. But this sacrifice can be mitigated by fine-tuning  $\{\lambda\}$ or the entire models as shown in Paradigm 2\&3. 

\end{itemize}

\begin{table}[h!]
\vspace{-0.1in}
\caption{Automated learning of $\{\lambda\}$ under adversarial training with various noise levels.}
\label{tab:linear_mnist_combination_learnable_lmbd}
\vspace{-0.05in}
\begin{small}

\begin{center}
\begin{tabular}{lccccccccc|c}
\toprule
 Budget &Natural& 0.05& 0.1& 0.15& 0.2& 0.25& 0.3&Learned $\{\lambda\}$ \\
\hline
 Adv-train ($\epsilon = 0.0$)&91.3 & 75.7 & 45.0 & 24.9 & 14.9 & 10.5 & 8.0&[0.955, 0.706, 0.722]\\
Adv-train ($\epsilon = 0.05$)&91.2 & 76.8 & 50.6 & 27.8 & 16.6 & 10.5 & 8.1&[0.953, 0.624, 0.748]\\
Adv-train ($\epsilon = 0.1$)&91.0 & 82.6 & 62.5 & 41.7 & 26.4 & 19.8 & 14.9 &[0.936, 0.342, 0.700]
\\
Adv-train ($\epsilon = 0.15$)&90.6 & 82.3 & 66.8 & 49.4 & 35.0 & 25.5 & 19.9&[0.879, 0.148, 0.599]\\
Adv-train ($\epsilon = 0.2$)&90.3 & 82.5 & 67.5 & 49.3 & 35.8 & 27.0 & 21.3&[0.724, 0.076, 0.420]
\\
Adv-train ($\epsilon = 0.25$)&87.7 & 81.0 & 66.0 & 48.5 & 36.4 & 26.6 & 21.6& [0.572, 0.049, 0.243]
\\
 Adv-train ($\epsilon = 0.3$)&81.5 & 75.3 & 61.2 & 44.7 & 33.7 & 26.0 & 20.1&[0.459, 0.033, 0.131]\\\bottomrule
\end{tabular}
\end{center}
\end{small}
\end{table}

\textbf{Behavior analysis on automated learning of $\mathbf{\{\lambda\}}$}. In Paradigm 2, we assert and expect that the learnable $\{\lambda\}$ across layers can achieve optimal performance under a specified noisy environment while maintaining the pre-trained parameters. To further validate our assertion and investigate the behavior of the learned $\{\lambda\}$, we simulate various noisy environments by introducing adversarial perturbations into the training data at different noise levels, $\epsilon$. We initialize the $\{\lambda\}$ across all layers to 0.5. From the results shown in Table~\ref{tab:linear_mnist_combination_learnable_lmbd}, we can make the following observations:
\begin{itemize}[left=0.0em]
    \item The learned $\{\lambda\}$ values are layer-specific, indicating that setting the same $\{\lambda\}$ for each layer in Paradigm 1 is not an optimal strategy. Automated learning of $\{\lambda\}$ enables the discovery of the optimal combination across layers.
    \item As the noise level in the training data increases, the learned $\{\lambda\}$ values tend to decrease, causing the hybrid architecture to resemble a more robust NRPM architecture. This suggests that our proposed Paradigm 2 can adaptively adjust $\{\lambda\}$ to accommodate noisy environments.
    
\end{itemize}

\textbf{Ablation studies.} To further investigate the effectiveness of our proposed robust architecture, we provide several ablation studies on backbone size, attack budget measurement, additional backbone MLP-Mixer~\citep{tolstikhin2021mlp},the number of layers $K$  in the Appendix~\ref{sec:mlp_ablation_backbone_size_app}, Appendix~\ref{sec:mlp_ablation_budget_norm_app}, and Appendix~\ref{sec:mlp_ablation_num_layers_app}, respectively.
These experiments demonstrate the consistent advantages of our method across different backbone sizes and attack budget measurements. Additionally, increasing the number of layers $K$ can further enhance robustness, though at the cost of a slight decrease in clean performance.

\subsection{Robustness Reprogramming on Convolution}

\textbf{Robustness reprogramming on convolution.}
 We evaluate the performance of our robustness reprogramming with various weight $\{\lambda\}$ in Figure~\ref{fig:lenet_diff_lambda}
 ,
 where we can observe that incorporating more NRPM-induced embedding will significantly improve the robustness while sacrifice a little bit of clean performance. Moreover, by fine-tuning NRPM-model, we can significantly improve robust performance while compensating for the sacrifice in clean accuracy.

\textbf{Adversarial fine-tuning.} Beyond natural training, we also validate the advantage of 
our NRPM architecture over the vanilla LPM under adversarial training. We utilize the pretrained parameter from LPM architecture, and track the clean/robust performance across 10 epochs under the adversarial fine-tuning for both architectures in Figure~\ref{fig:lenet_adv_train}. The curves presented demonstrate the consistent improvement of NRPM over LPM across all the epochs.

\begin{figure}[h!]
    \centering
    \begin{minipage}{0.48\linewidth}
\includegraphics[width=0.85\linewidth]{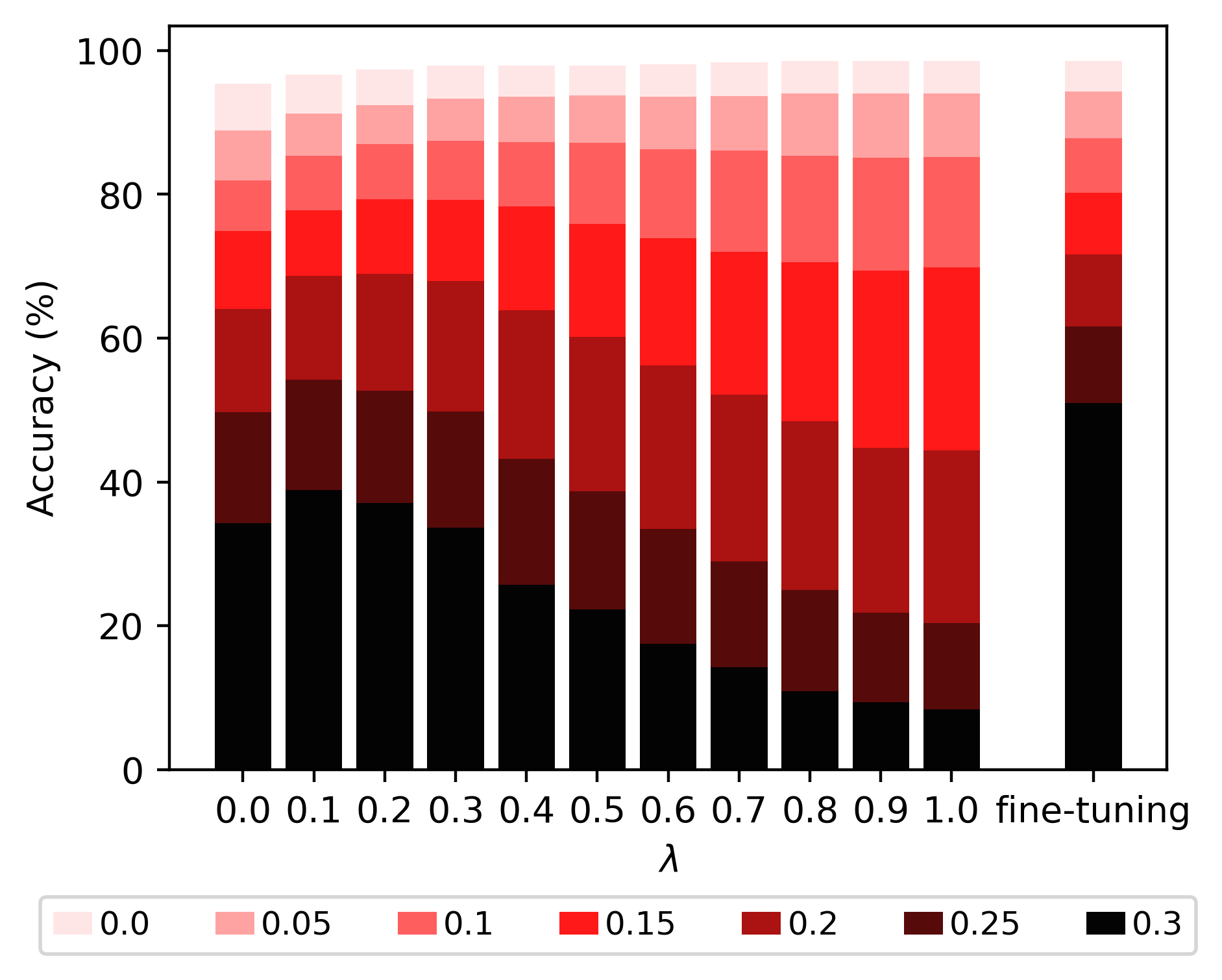}
    \vspace{-0.15in}
    \caption{Robustness reprogramming on LeNet.
    The depth  of color represents the size of budget. 
    }
    \label{fig:lenet_diff_lambda}
    \end{minipage}
    \hfill
    \begin{minipage}{0.48\linewidth}
    \includegraphics[width=0.9\linewidth]{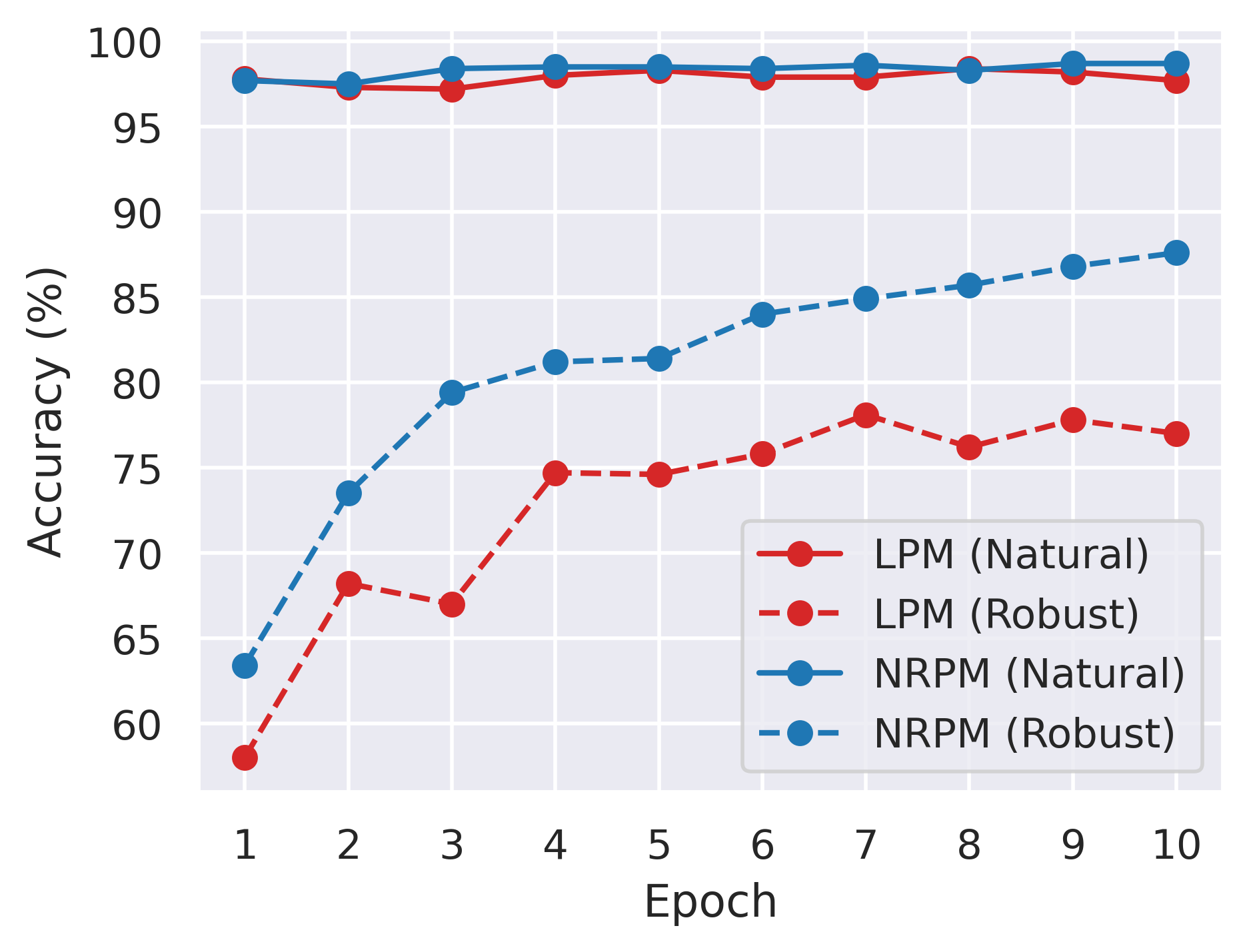}
    \vspace{-0.2in}
    \caption{Adversarial fine-tuning on LeNet.}
    \label{fig:lenet_adv_train}
    \end{minipage}
\end{figure}

\textbf{Hidden embedding visualization.} We also conduct visualization analyses on the hidden embedding to obtain better insight into the effectiveness of our proposed NRPM. First, we quantify the difference between clean embeddings ($\vx$ or $\vz_i$) and attacked embeddings ($\vx'$ or $\vz'_i$) across all layers in Table~\ref{tab:lenet_visualization}, and visualize them in Figure~\ref{fig:visualization_lenet_example}
  and Figure~\ref{fig:visualization_lenet_all}.
The results in Table~\ref{tab:lenet_visualization} show that NRPM-LeNet has smaller embedding difference across  layers, indicating that our proposed NRPM architecture indeed mitigates the impact of the adversarial perturbation. Moreover, 
as demonstrated in the example 
in Figure~\ref{fig:visualization_lenet_example}, the presence of adversarial perturbations can disrupt the hidden embedding patterns, leading to incorrect predictions in the case of vanilla LeNet. 
In contrast, our NRPM-LeNet appears to lessen the effects of such perturbations and maintain predicting groundtruth label. From the figures, we can also clearly tell that the difference between clean attacked embeddings of LPM-LeNet is much more significant than  in  NRPM-LeNet.
\begin{table}[h!]
\caption{Embedding difference between clean and adversarial data ($\epsilon=0.3$) in LeNet - MNIST  }
\vspace{0.6em}
\label{tab:lenet_visualization}

\begin{minipage}{0.48\linewidth}
\centering
\begin{tabular}{c|c|c|c}
\toprule
\textbf{LPM-LeNet} & $|| \cdot ||_1$ & $|| \cdot ||_2$ & $|| \cdot ||_\infty$ \\
\hline
$|\vx - \vx'|$ & 93.21 & 27.54 & 0.30 \\
$|\vz_1 - \vz_1'|$ & 271.20 & 116.94 & 1.56 \\
$|\vz_2 - \vz_2'|$ & 79.52 & 62.17 & 1.89 \\
$|\vz_3 - \vz_3'|$ & 18.77 & 22.56 & 2.32 \\
$|\vz_4 - \vz_4'|$ & 9.84 & 18.95 & 3.34 \\

\bottomrule
\end{tabular}

\end{minipage}
    \hfill
\begin{minipage}{0.48\linewidth}
\centering
\begin{tabular}{c|c|c|c}
\toprule

\textbf{NRPM-LeNet} & $|| \cdot ||_1$ & $|| \cdot ||_2$ & $|| \cdot ||_\infty$ \\
\hline
$|\vx - \vx'|$ & 90.09 & 26.67 & 0.30 \\
$|\vz_1 - \vz_1'|$ & 167.52 & 58.55 & 1.19 \\
$|\vz_2 - \vz_2'|$ & 19.76 & 4.27 & 0.56 \\
$|\vz_3 - \vz_3'|$ & 3.63 & 0.98 & 0.50 \\
$|\vz_4 - \vz_4'|$ & 2.21 & 0.79 & 0.57 \\

\bottomrule
\end{tabular}

\end{minipage}

\end{table}

\begin{figure}[h!]
    \centering
\vspace{-0.1in}

\subfigure[Visualization on LPM-LeNet.] {\includegraphics[width=0.49\columnwidth]{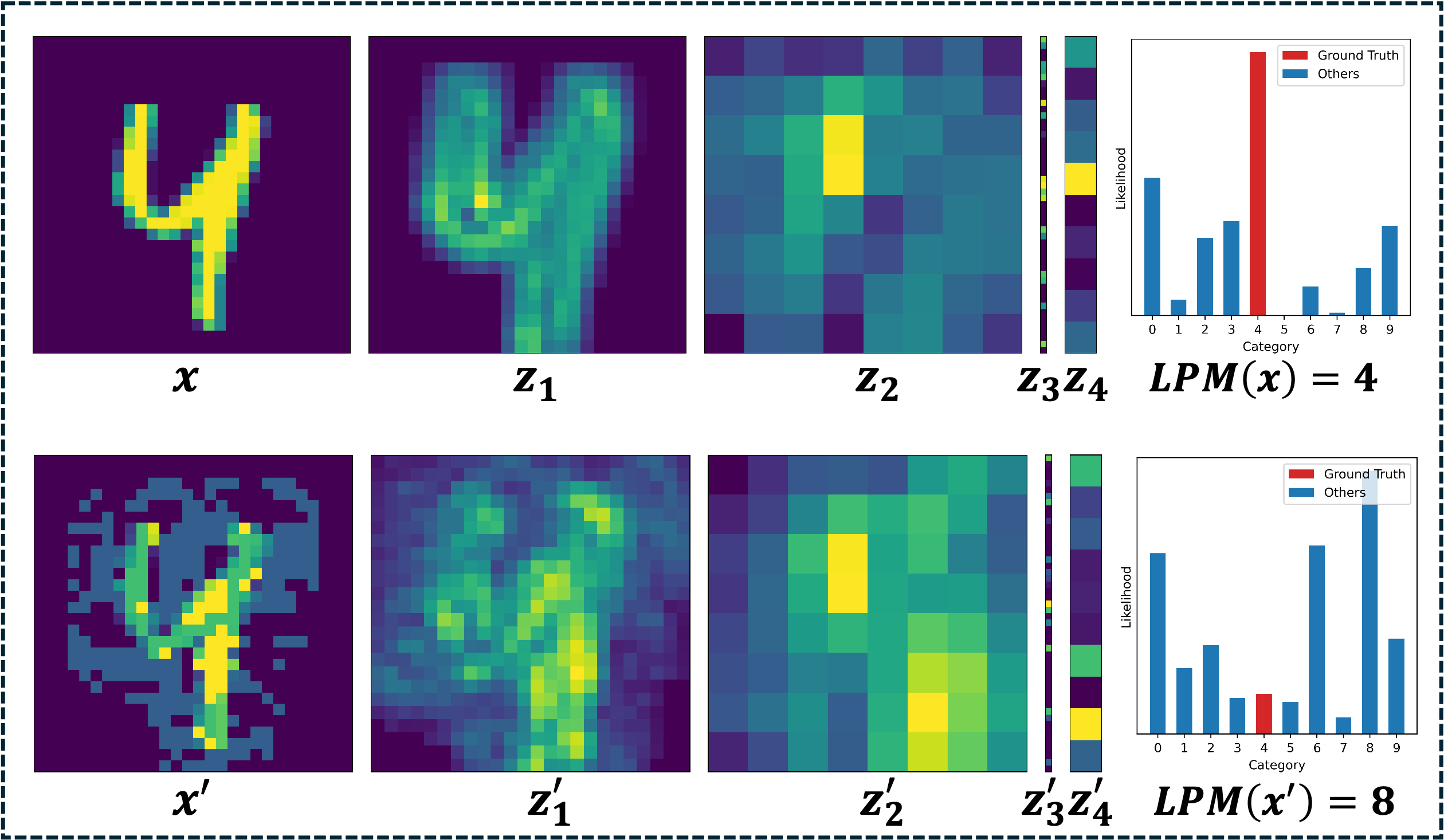}}
\subfigure[Visualization on NRPM-LeNet.] {\includegraphics[width=0.49\columnwidth]{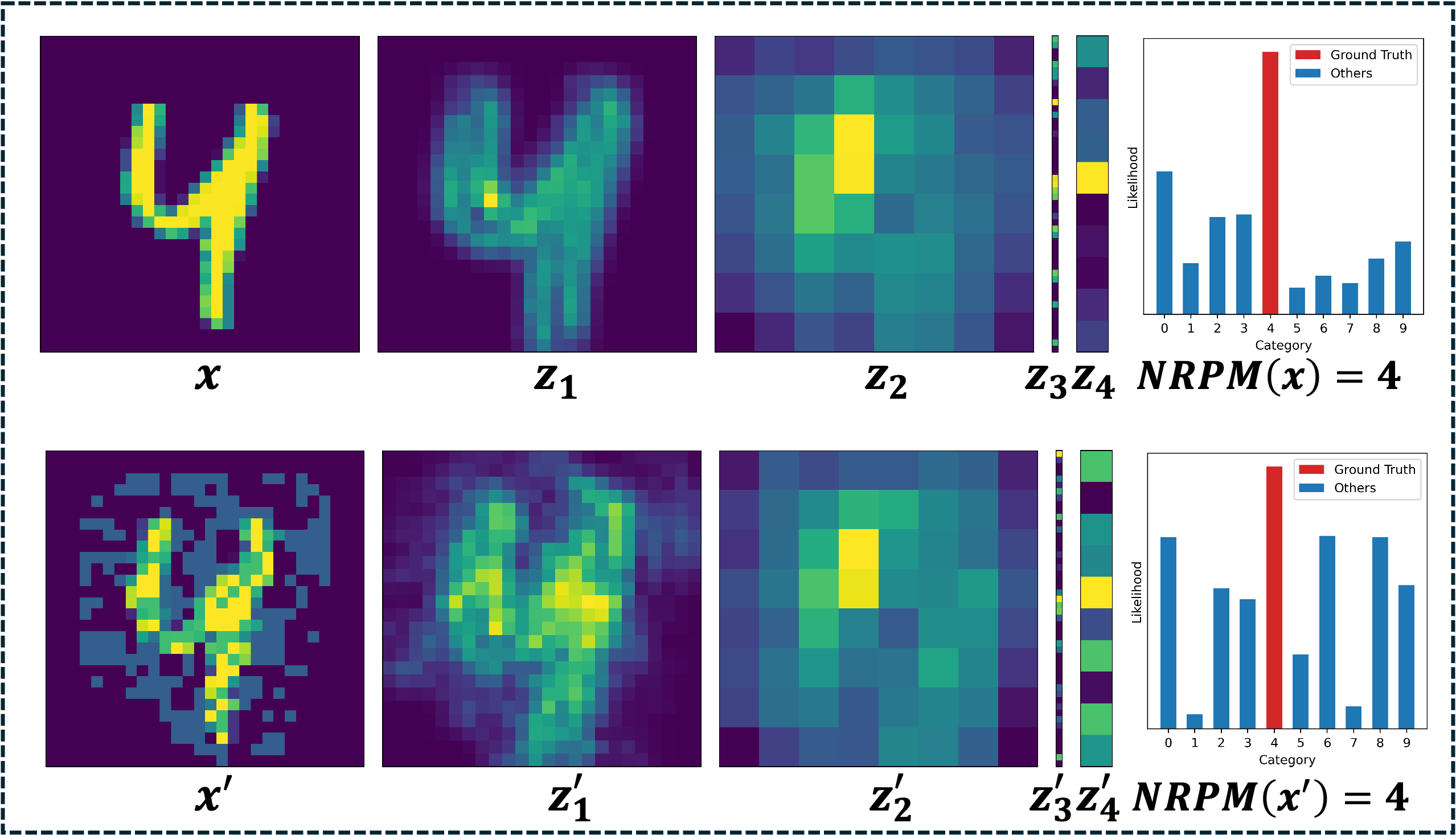}}

\vspace{-0.1in}
    \caption{Visualization of hidden embeddings. The LPM-LeNet is more sensitive to perturbation compared to the NRPM-LeNet: (1) When comparing $\vz_i$ and $\vz_i^\prime$, LPM shows a more significant difference than NRPM. (2) When comparing the likelihood of predictions, the perturbation misleads LPM from predicting 4 to 8, while NRPM consistently predicts 4 in both clean and noisy scenarios.}
    \label{fig:visualization_lenet_example}
\end{figure}

\textbf{Additional experiments.} 
To further demonstrate the effectiveness of our proposed method, we include experiments on two additional datasets, SVHN and ImageNet10, which are provided in Appendix~\ref{sec:add_datasets_app}. 
All these experiments demonstrate consistent advantages of our proposed method.

\subsection{Robustness Reprogramming on ResNets}

In this section, we will evaluate the robustness of robustness reprogramming on ResNets across various attacks and further validate the effectiveness under diverse settings in the ablation studies.

\begin{table}[h!]
\caption{Robustness reprogramming on CIFAR10 with narrow ResNet18 as backbone.
}
\label{tab:resnet18_cifar10_combination}
\begin{center}
\begin{tabular}{c|cccccccccc}
\toprule
 Budget $\epsilon$  &\textbf{Natural} & \textbf{8/255} & \textbf{16/255} & \textbf{32/255}&$\{\lambda\}$\\
\hline
&67.89&41.46&16.99&3.17&$\lambda=1.0$\\
  & 59.23 & 40.03 & 23.05& 8.71&$\lambda=0.9$  \\ 
  & 40.79 & 28.32 & 20.27 &13.70&$\lambda=0.8$ \\ 
   Paradigm 1& 24.69 & 18.34 & 15.31 &13.09&$\lambda=0.7$ \\ 
  (without tuning) & 17.84 & 14.26 & 12.63 &11.66 &$\lambda=0.6$ \\ 
  & 15.99 & 12.51 & 11.42 &11.07 & $\lambda=0.5$ \\
  & 10.03 & 10.02 & 10.0  & 10.0&$\lambda=0.4$ \\ 
  \hline
Paradigm 2 (tuning $\lambda$) &69.08&44.09&24.94&12.63& Learnable\\\hline
Paradigm 3 (tuning all) &71.79&50.89&39.58&30.03&Learnable\\
\bottomrule

\end{tabular}
\end{center}
\vspace{-0.1in}
\end{table}

\textbf{Robustness reprogramming on ResNets.}
We evaluate the robustness of our robustness reprogramming via three paradigms in Table~\ref{tab:resnet18_cifar10_combination}. From the results, we can observe that: (1)
     In Paradigm 1, adding more NRPM-based embeddings without fine-tuning leads to a notable drop in clean performance in ResNet18, which also limits the robustness improvement.
(2) In Paradigm 2, by adjusting only the $\{\lambda\}$ values, we can improve the clean and robust performance, indicating the need of layer-wise balance across different layers.
(3) In Paradigm 3, by tuning both $\{\lambda\}$ and parameters, we observe that both the accuracy and robustness can be further improved.

\textbf{Adversarial robustness.}
To validate the effectiveness of our robustness reprogramming with existing method, we select several existing popular adversarial defenses and report the experimental results of backbone ResNet18 under various attacks in Table~\ref{tab:cifar10-main}. From the results we can observe that our robustness reprgramming exhibits excellent robustness across various attacks.

\begin{table}[h]
\centering
\caption{Adversarial robsustness on CIFAR10 with ResNet18 as backbone.
}
\vspace{0.1in}
\begin{tabular}{c|c|cccccc}
\toprule
\textbf{Method}
&Natural&PGD&FGSM&C\&W&AA&AVG\\
\hline
PGD-AT & 80.90 & 44.35 & 58.41 & 46.72 & 42.14 & 47.91 \\
TRADES-2.0&82.80&48.32&51.67&40.65&36.40&44.26\\
TRADES-0.2&\textbf{85.74}&32.63&44.26&26.70&19.00&30.65\\
MART & 79.03 & 48.90 & 60.86 & 45.92 & 43.88 & 49.89 \\
SAT & 63.28 & 43.57 & 50.13 & 47.47 & 39.72 & 45.22 \\
AWP &81.20 &51.60& 55.30& 48.00& 46.90&50.45 \\
Paradigm 3 (Ours) &80.43&\textbf{57.23}&\textbf{70.23}&\textbf{64.07}&\textbf{64.60}&\textbf{64.03}\\
\bottomrule
\end{tabular}

\label{tab:cifar10-main}
\end{table}

\begin{figure}[h!]
    \centering
    \begin{minipage}{0.5\linewidth}

    \includegraphics[width=0.9\linewidth]{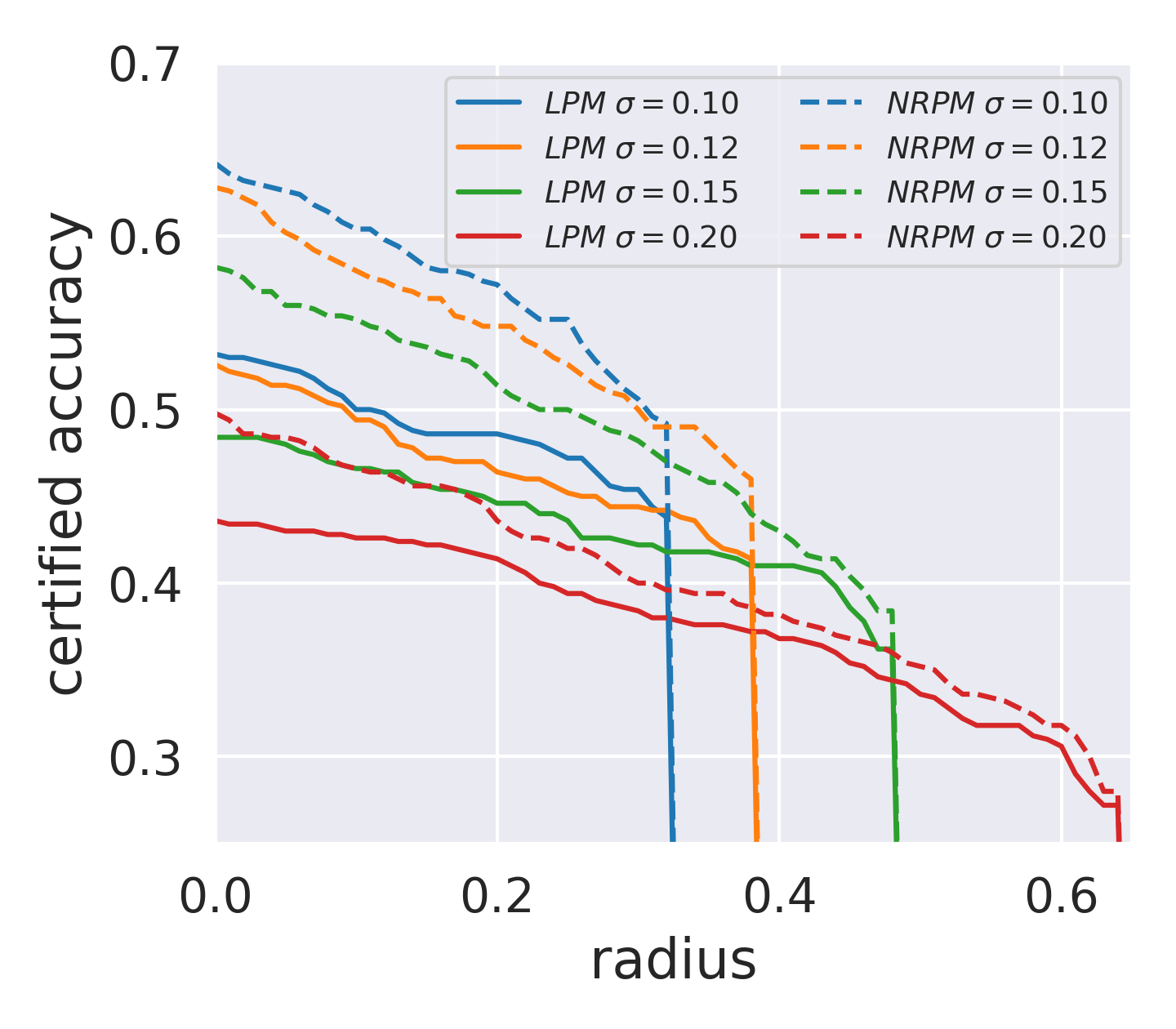}
    \vspace{-0.2in}
    \caption{Certified robustness via randomized smoothing with various $\sigma$ levels.}
    \label{fig:certified_robustness}
    \end{minipage}
    \hfill
    \begin{minipage}{0.45\linewidth}

    \includegraphics[width=1.0\linewidth]{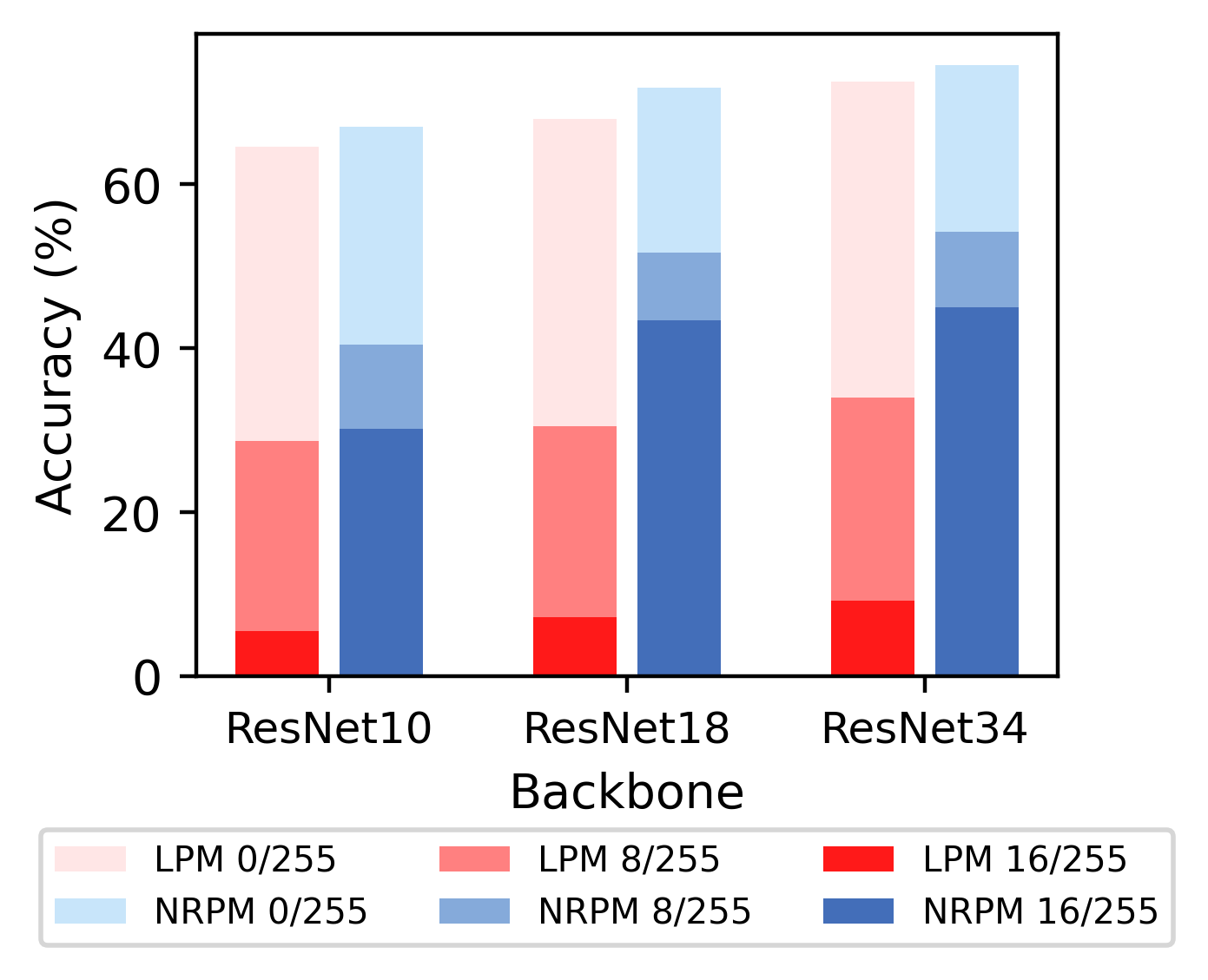}
    \vspace{-0.2in}
    \caption{Ablation study on backbone size. The depth  of color represents  budget size.}
    \label{fig:resnet_diff_backbone_aa}
    \end{minipage}
\end{figure}

\textbf{Certified Robustness.} 
Additionally, we also evaluate the 
certified robustness using randomized smoothing with various $\sigma$ levels in Figure~\ref{fig:certified_robustness}. The curves presented in the figure demonstrate a significant advantage of our NRPM over vanilla LPM, further validating the effectiveness of our proposed architecture.

\textbf{Different backbone sizes \& budgets.} Here, we conduct ablation studies on the backbone size \& budget
under AutoAttack in Figure~\ref{fig:resnet_diff_backbone_aa} and leave the results under PGD in Appendix~\ref{sec:resnet_ablation_backbone_size}. The results show the evident advantage of our NRPM architecture.

\section{Conclusion}
This paper addresses a fundamental challenge in representation learning: how to reprogram a well-trained model to enhance its robustness without altering its parameters. We begin by revisiting the essential linear pattern matching in representation learning and then introduce an alternative non-linear robust pattern matching mechanism. Additionally, we present a novel and efficient Robustness Reprogramming framework, which can be flexibly applied under three paradigms, making it suitable for practical scenarios. 
Our theoretical analysis and comprehensive empirical evaluation demonstrate significant and consistent performance improvements.
This research offers a promising and complementary approach to strengthening adversarial defenses in deep learning, significantly contributing to the development of more resilient AI systems.

\newpage 

\bibliography{iclr2025_conference}
\bibliographystyle{iclr2025_conference}

\newpage
\appendix

\section{Efficient Implementation of Robust Convolution/MLPs}
\label{sec:implementation_robust_arch}

In the main paper, we formulate the pixel-wise algorithm for notation simplicity. However, it is not trivial to implement such theory into practice in convolution and MLPs layer. Here, we will present the detailed implementation or techniques for each case.

\subsection{Robust MLPs}
\label{sec:implementation_robust_mlp}
In MLPs, we denote the input as $\vX\in\mathbb{R}^{D_1}$, the output embedding as $\vZ\in \mathbb{R}^{D_2}$, the
model parameters as $\vA\in\mathbb{R}^{ D_1\times D_2}$. The pseudo code is presented in Algorithm~\ref{alg:irls}.

\begin{algorithm}[H]
\caption{Robust MLPs with NRPM}
\label{alg:irls}
\includegraphics[width=0.8\linewidth]{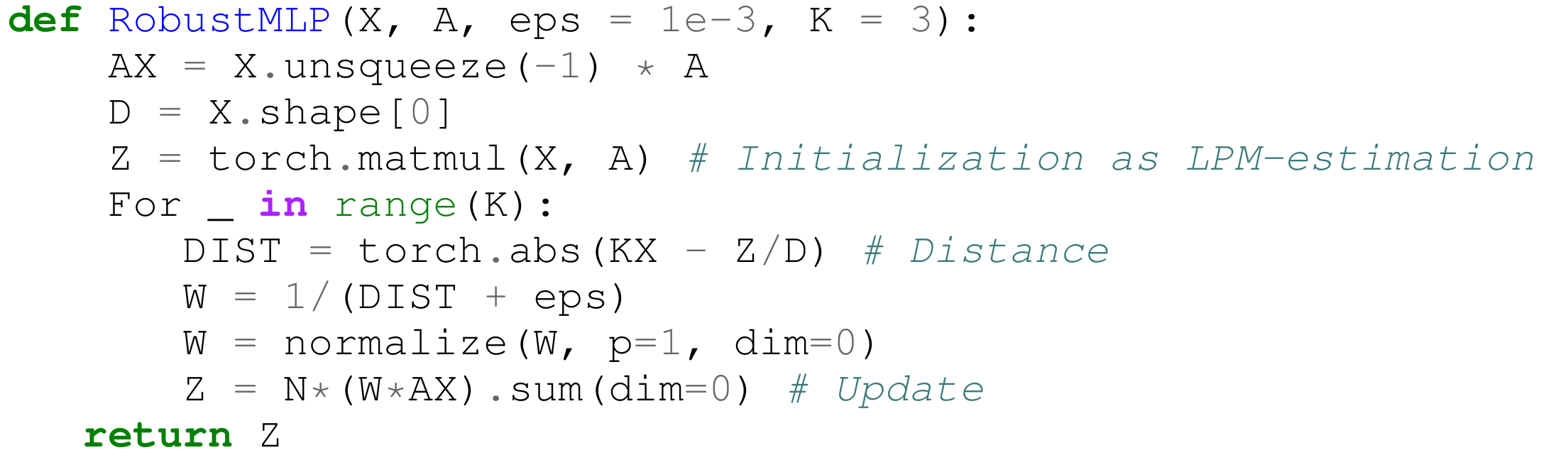}
\end{algorithm}

\subsection{Robust Convolution}
\label{sec:implementation_robust_cnn}
\textbf{Unfolding.}
To apply robust estimation over each patch, we must first unfold the convolution into multiple patches and process them individually. While it's possible to use a for loop to extract patches across the channels, height, and width, this approach is not efficient. 
Instead, we can leverage PyTorch's \texttt{torch.nn.functional.unfold} function to streamline and accelerate both the unfolding and computation process.

\section{Theoretical Proof}
\subsection{Proof of Lemma~\ref{lemma:local_upper_bound}}
\label{sec:proof_local_upper_bound}

\begin{proof}
    Since $\sqrt{a}\leq\frac{a}{2\sqrt{b}}+\frac{\sqrt{b}}{2}$ and the equlity holds when $a=b$, by replacemnet as $a=(a_dx_d-z/D)^2$ and $b=(a_dx_d-z_0/D)^2$, then
\begin{align*}
    |a_dx_d-z/D|&\leq \frac{1}{2}\cdot \frac{1}{|a_dx_d-z^{(k)}/D|}\cdot (a_dx_d-z/D)^2+\frac{1}{2}|a_dx_d-z_0/D|\\
    &=w_d\cdot (a_dx_d-z/D)^2+\frac{1}{2}|a_dx_d-z_0/D|
\end{align*}

Sum up the items on both sides, we obtain 
\begin{align*}
\cL(z)=\sum_{d=1}^D|a_dx_d-z/D|\leq \sum_{d=1}^D w_d\cdot (a_dx_d-z/D)^2+\frac{1}{2}\sum_{d=1}^D|a_dx_d-z_0/D| \frac{1}{2}= \cU(z,z_0)
\end{align*}

and the equality holds at $a=b$ ($z=z_0$):
\begin{equation}
\label{eq:local_equality}
    \cU(z_0,z_0) = \cL(z_0).
\end{equation}
\end{proof}

\subsection{Derivation of IRLS algorithm}

\label{sec:derivation_irls_algo}

\begin{proof}

$$\cU^\prime(z^{(k)},z^{(k)})=\frac{1}{D}\sum_{d=1}^D w^{(k)}_d\cdot 2(z^{(k)}/D-a_dx_d)$$
$$ \cU^{\prime\prime}(z^{(k)}, z^{(k)})=\frac{2}{D^2}\sum_{d=1}^D  w^{(k)}_d $$
\begin{align}
z^{(k+1)}&=z^{(k)}-\frac{\cU^\prime(z^{(k)},z^{(k)})}{ \cU^{\prime\prime}(z^{(k)},z^{(k)})}\\
&=z^{(k)}-\frac{\frac{1}{D}\sum_{d=1}^D w^{(k)}_d\cdot 2(z^{(k)}/D-a_dx_d)}{\frac{2}{D^2}\sum_{d=1}^D  w^{(k)}_d}\\
&=D\cdot \frac{\sum_{d=1}^D  w^{(k)}_d a_dx_d}{\sum_{d=1}^D  w^{(k)}_d}
\end{align}

where $w_d^{(k)}=\frac{1}{2}\cdot \frac{1}{|a_dx_d-z^{(k)}/D|}.$ 
Since the constant in $w_d^{(k)}$ can be canceled in the updated formulation, we have:
$$w_d^{(k)}= \frac{1}{|a_dx_d-z^{(k)}/D|}.$$

\end{proof}

\subsection{Proof of Theorem~\ref{thm:influence_function}}

\label{sec:proof_influecen_function}

\begin{proof}
For notation simplicity, we denote $y=\frac{1}{D}T_{LPM}(F)$, $ y_\epsilon=\frac{1}{D}T_{LPM}(F_{\epsilon})$, $z=\frac{1}{D}T_{NRPM}(F)$, $ z_\epsilon=\frac{1}{D}T_{NRPM}(F_{\epsilon})$. Since $\epsilon$ is very small and $D$ is large enough, we can assume $|a_dx_d-y|\approx|a_dx_d-y_\epsilon|$ for simplicity.

\textbf{Influence function for LPM.}
The new weighted average \( y_\epsilon \) under contamination becomes:
\[
y_\epsilon = (1-\epsilon) y + \epsilon x_0 
\]
Taking the limit as \( \epsilon \to 0 \), we get:
\[
IF(x_0; T_{LPM}, F) = \lim_{\epsilon \to 0} \frac{D(y_\epsilon - y)}{\epsilon} = \lim_{\epsilon \to 0} \frac{D\epsilon(x_0 - y)}{\epsilon} = D(x_0 - y)=D(x_0-z_{LPM}/D)
\]
This shows that the weighted average \( y \) is directly influenced by \( x_0 \), making it sensitive to outliers.

\textbf{Influence function for NRPM.}
Next, we consider the reweighted estimate \( z \), when we introduce a small contamination at \( x_0 \), which changes the distribution slightly. The contaminated reweighted estimate \( z_\epsilon \) becomes:
\[
z_\epsilon = \frac{(1 - \epsilon) \sum_{d=1}^{D} w_d a_d  x_d + \epsilon w_0  x_0}{(1 - \epsilon) \sum_{d=1}^{D} w_d + \epsilon w_0}
\]
We can simplify the expression for \( z_\epsilon \) as:

\[
z_\epsilon = \frac{\sum_{d=1}^{D} w_d  a_dx_d + \epsilon (w_0  x_0 - \sum_{d=1}^{D} w_d a_d x_d)}{(1 - \epsilon) \sum_{d=1}^{D} w_d+ \epsilon w_0}
\]

First, expand the difference \( z_\epsilon - z \):
\begin{align*}
    z_\epsilon - z =& \frac{\sum_{d=1}^{D} w_d  a_dx_d + \epsilon (w_0  x_0 - \sum_{d=1}^{D} w_d  a_dx_d)}{(1 - \epsilon) \sum_{d=1}^{D} w_d + \epsilon w_0} - \frac{\sum_{d=1}^{D} w_d  a_dx_d}{\sum_{d=1}^{D} w_d}\\
    =&\frac{\sum_{d=1}^{D} w_d  a_dx_d \cdot \sum_{d=1}^Dw_d + \epsilon (w_0  x_0 - \sum_{d=1}^{D} w_d a_dx_d) \cdot \sum_{d=1}^Dw_d }{[(1 - \epsilon) \sum_{d=1}^{D} w_d + \epsilon w_0]\cdot \sum_{d=1}^{D} w_d}\\
    &+\frac{ [(1 - \epsilon) \sum_{d=1}^{D} w_d + \epsilon w_0]\cdot \sum_{d=1}^{D} w_d  a_dx_d }{[(1 - \epsilon) \sum_{d=1}^{D} w_d + \epsilon w_0]\cdot \sum_{d=1}^{D} w_d}\\
    =&\frac{ \epsilon (w_0  x_0 - \sum_{d=1}^{D} w_d a_dx_d) \cdot \sum_{d=1}^Dw_d -  \epsilon(w_0- \sum_{d=1}^{D} w_d )\cdot \sum_{d=1}^{D} w_d  a_dx_d }{[(1 - \epsilon) \sum_{d=1}^{D} w_d + \epsilon w_0]\cdot \sum_{d=1}^{D} w_d}\\
    =&\frac{ \epsilon ( w_0  x_0 \cdot \sum_{d=1}^Dw_d -    w_0\cdot \sum_{d=1}^{D} w_d  a_dx_d ) }{[(1 - \epsilon) \sum_{d=1}^{D} w_d + \epsilon w_0]\cdot \sum_{d=1}^{D} w_d}\\
\end{align*}

Finally, divide the difference by \( \epsilon \) and take the limit as \( \epsilon \to 0 \):

\begin{align*}
    IF(x_0; T_{NRPM}, F) &= \lim_{\epsilon \to 0}\frac{D(z_\epsilon-z)}{\epsilon}\\
    &=D\cdot\frac{  w_0  x_0  \cdot \sum_{d=1}^Dw_d - w_0 \cdot \sum_{d=1}^{D} w_d  a_dx_d }{ \sum_{d=1}^{D} w_d\cdot \sum_{d=1}^{D} w_d}\\
&=\frac{ Dw_0 \left(x_0-z_{NRPM}/D
\right)  }{ \sum_{d=1}^{D} w_d}
\end{align*}

Since \( w_0 \) is small for outliers (large \( | x_0 - z_{LPM}/D| \)), the influence of \( x_0 \) on NRPM is diminished compared to the influence on LPM. Therefore, NRPM is more robust than LPM because the influence of outliers is reduced.

\end{proof}

\newpage
\section{Additional Experimental Results on MLPs}

\subsection{MLP-Mixer}

MLP-Mixer~\citep{tolstikhin2021mlp} serves as an alternative model to CNNs in computer vision. Building on the excellent performance of basic MLPs, we further explore the effectiveness of our NRPM with MLP-Mixer, presenting the results in Figure~\ref{fig:mlp_mixer_cifar10} and Table~\ref{tab:mlp_mixer}. The results validate the effectiveness of our proposed methdod in MLP-Mixer with different blocks across various budgets.

\begin{figure}[h!]
    \centering
    \includegraphics[width=0.6\linewidth]{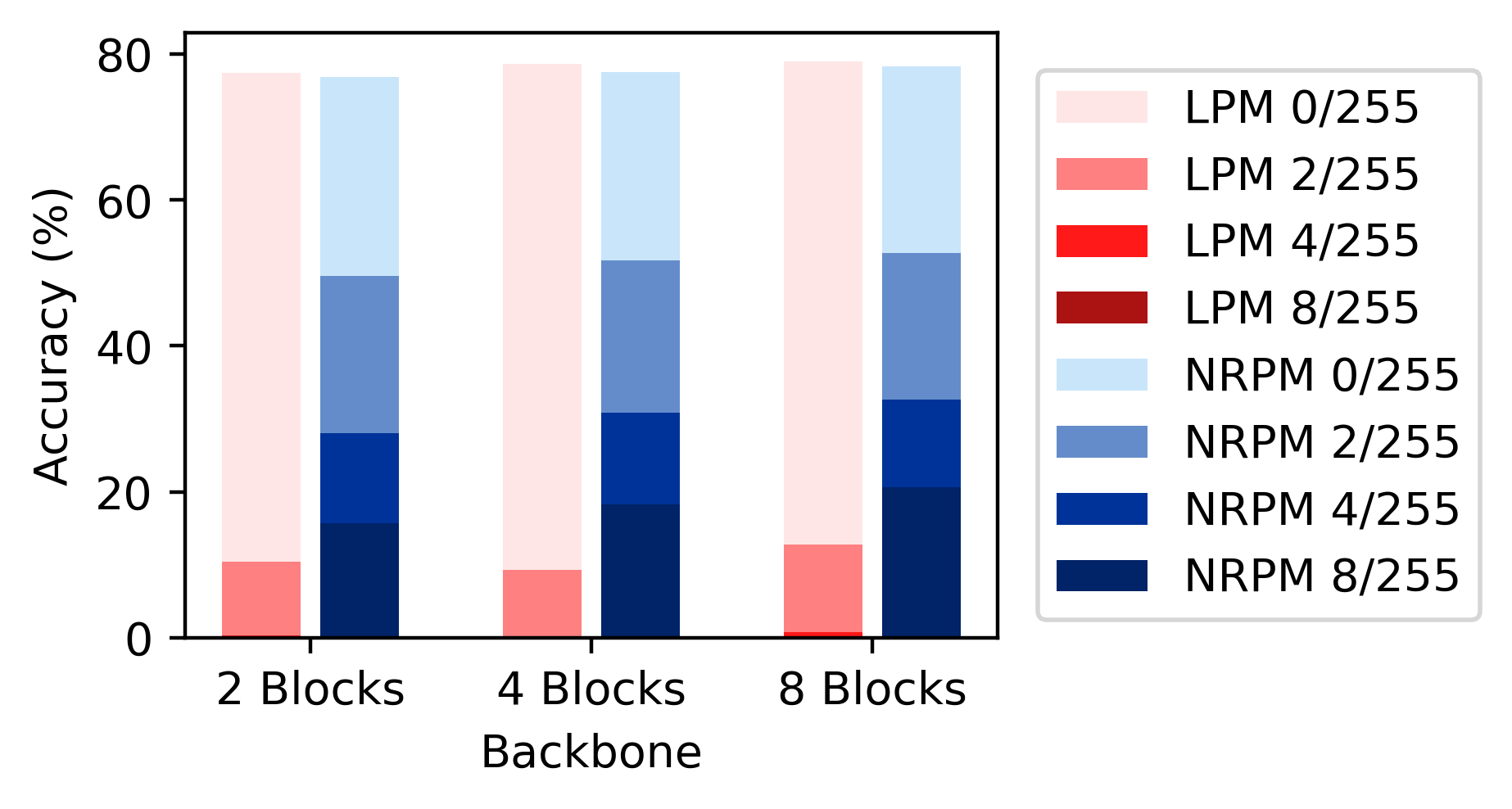}
    \caption{Robustness  under PGD on MLP-Mixer. The depth  of the color represents the size of the budget.  }
    \label{fig:mlp_mixer_cifar10}
\end{figure}

\begin{table}[h!]
\caption{Adversarial Robustness of MLP-Mixer - CIFAR10 
}
\label{tab:mlp_mixer}
\begin{center}
\begin{tabular}{l|ccccccccc}
\toprule
Model &Natural&$\epsilon=1/255$&$\epsilon=2/255$&$\epsilon=3/255$&$\epsilon=4/255$&$\epsilon=8/255$\\
\hline
LPM-MLP-Mixer-2&77.45&38.55&10.39&2.24&0.34&0.0\\
NRPM-MLP-Mixer-2&76.86&64.27&49.62&37.22&27.99&15.73\\
\hline
LPM-MLP-Mixer-4&78.61&35.99&9.35&1.45&0.19&0.0\\
NRPM-MLP-Mixer-4&77.51&66.72&51.73&39.43&30.88&18.31&\\
\hline
LPM-MLP-Mixer-8&78.98&40.55&12.81&3.27&0.73&0.01\\
NRPM-MLP-Mixer-8&78.32&66.82&52.73&40.37&32.66&20.61\\
\bottomrule
\end{tabular}
\end{center}
\end{table}

\subsection{The effect of backbone size.}
\label{sec:mlp_ablation_backbone_size_app}
We select the MLP backbones with different layers to  investigate the performance under different model sizes.  We report the performance with the linear models of 1 (784 - 10), 2 (784 - 64 -10), 3 (784 - 256 - 64 -10) layers. We present the fine-tuning results for both the LPM model and NRPM model in  Table~\ref{tab:linear_mnist_diff_layers}, from which we can make the following observations:
\begin{itemize}[left=0.0em]
    \item Our RosNet show better robustness with different size of backbones
    \item Regarding clean performance, linear models with varying layers are equivalent and comparable. However, our RosNet shows some improvement as the number of layers increases.
    \item The robustness of our RosNet enhances with the addition of more layers. This suggests that our robust layers effectively mitigate the impact of perturbations as they propagate through the network.
\end{itemize}

\begin{table}[h!]
\caption{Robustness of MLPs with different layers on MNIST.}
\label{tab:linear_mnist_diff_layers}
\begin{center}
\begin{tabular}{c|lccccccccc}
\toprule
Model Size &Arch / Budget&0& 0.05& 0.1& 0.15& 0.2& 0.25& 0.3\\
\hline
\multirow{2}{*}{1 Layer}&LPM-model &91.1 & 12.0 & 7.1 & 0.5 & 0.0 & 0.0 & 0.0\\
&NRPM-model &87.6&14.0 & 13.6 & 12.9 & 12.1 & 11.5 & 10.3\\
\hline
\multirow{2}{*}{2 Layers}&LPM-model&91.6& 32.9& 2.6& 0.2& 0.0& 0.0& 0.0\\
&NRPM-model &89.1&37.0 & 34.2 & 30.7 & 25.8 & 22.7 & 21.2\\
\hline
\multirow{2}{*}{3 Layers}&LPM-model&90.8 & 31.8 & 2.6 & 0.0 & 0.0 & 0.0 & 0.0\\
&NRPM-model&90.7 & 47.2 & 45.1 & 37.8 & 30.8 & 24.9 & 21.1\\
\bottomrule
\end{tabular}
\end{center}
\end{table}

\subsection{Attack budget measurement}
\label{sec:mlp_ablation_budget_norm_app}

\begin{figure}[h!]
\vskip -0.1in
\begin{center}
\subfigure[$L_0$-attack] {\includegraphics[width=0.3\columnwidth]{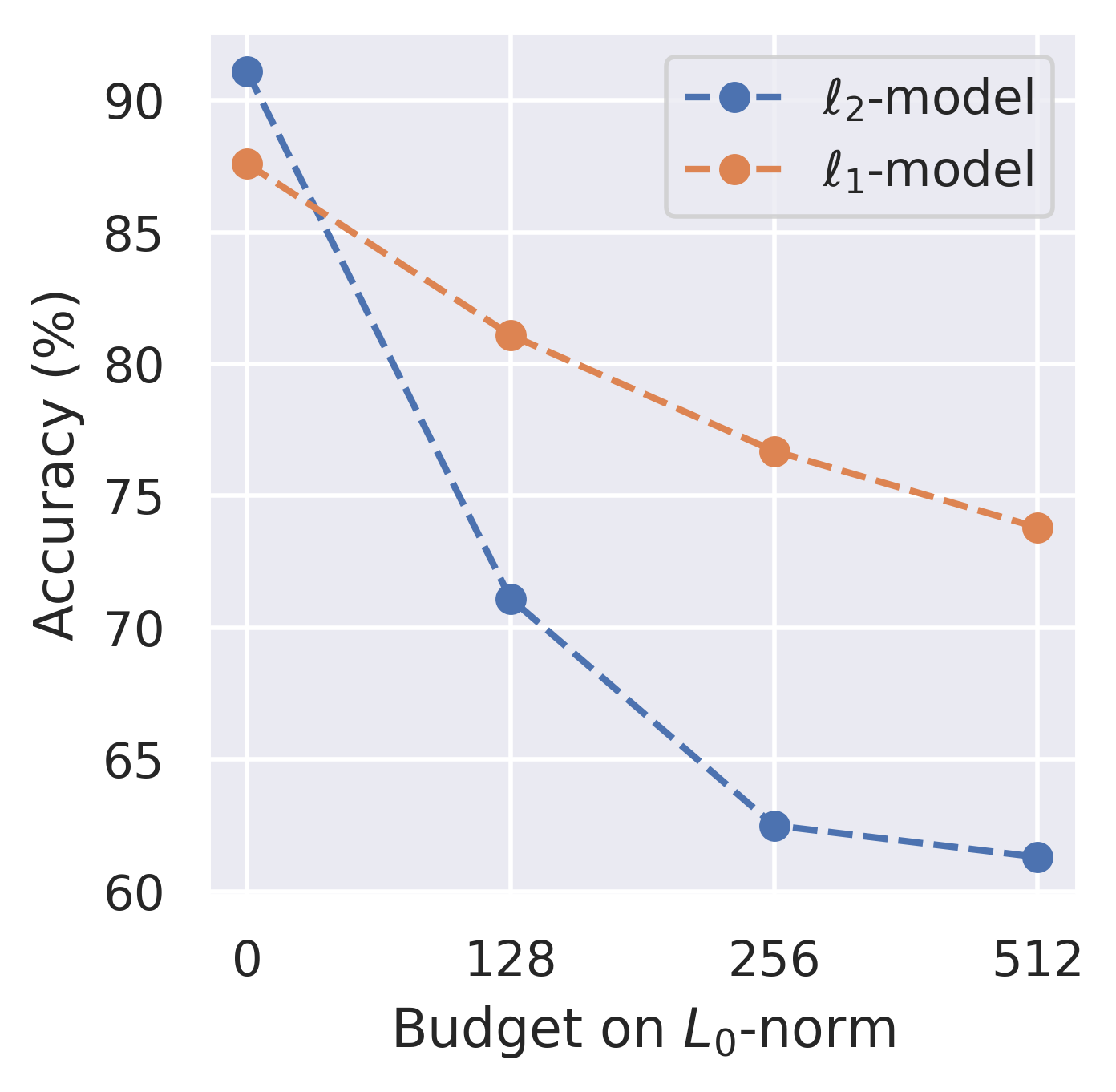}}
\subfigure[$L_2$-attack] {\includegraphics[width=0.3\columnwidth]{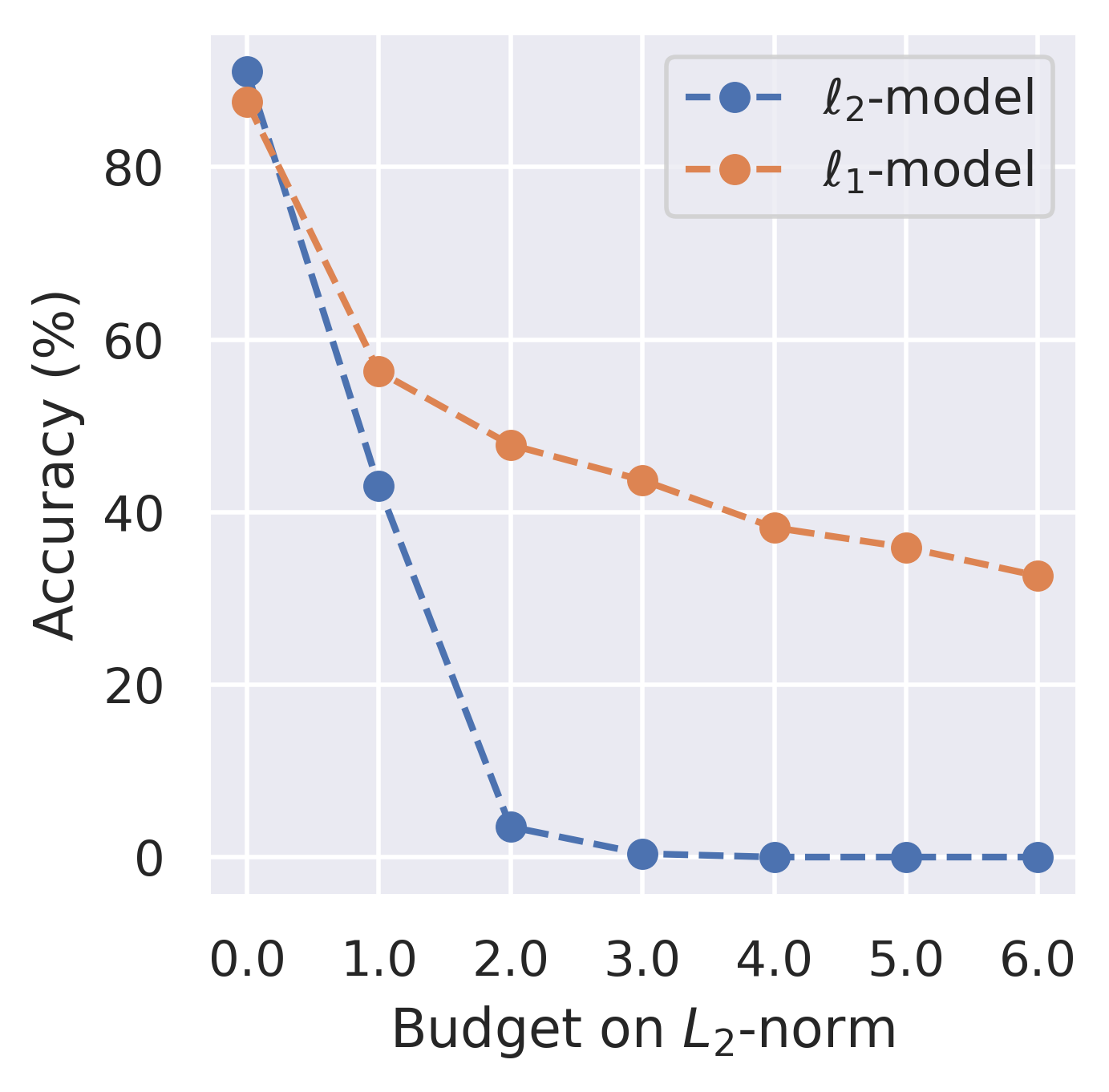}}
\subfigure[\centering $L_\infty$-attack] 
{\includegraphics[width=0.3\columnwidth]{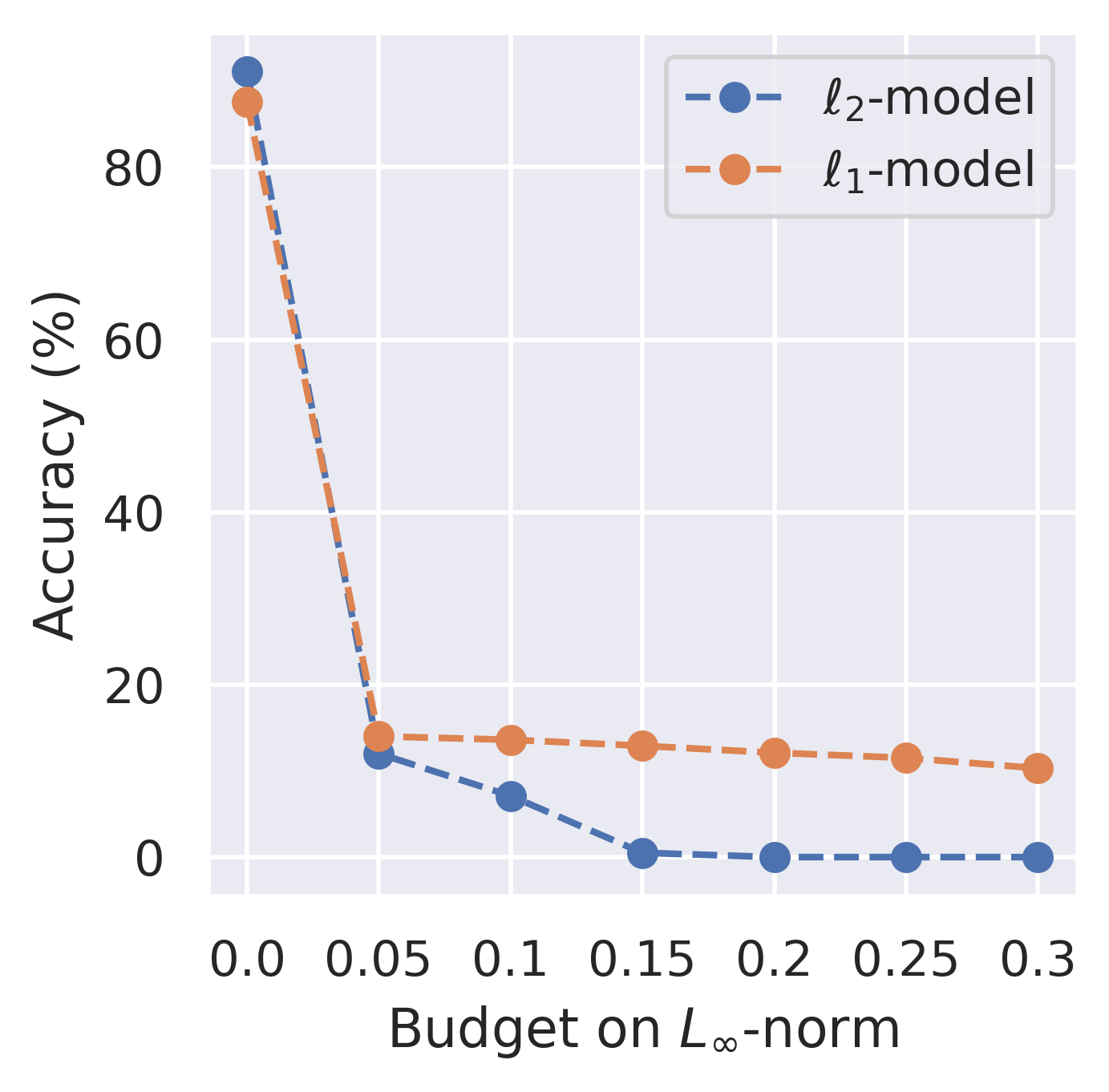}}

\caption{Robustness under attacks with different norms.}
\label{fig:mlp_diff_norm}

\end{center}
\vskip -0.1in
\end{figure}

In previous results of different backbone sizes, we notice the NRPM model is not effective enough under $L_\infty$-attack. To verify the effect the attack budget measurement, we evaluate the robustness under $L_\infty$,$L_2$,$L_0$ attacks where the budget   is measured under the $L_\infty$,$L_2$,$L_0$-norms, that is, $\|x-x^\prime\|_p\leq \text{budget}, $ where $p=0,2,\infty$. We can make the following observations from the results in Figure~\ref{fig:mlp_diff_norm}:

\begin{itemize}[left=0.0em]
    \item Our NRPM architecture consistently improve the robustness over the LPM-backbone across various attacks and budgets.
    \item Our NRPM model experiences a slight decrease in clean accuracy, suggesting that the median retains less information than the mean.
    \item Our NRPM model shows improved performance under the $L_0$ and $L_2$ attacks compared to the $L_\infty$ attack. This behavior aligns with the properties of mean and median. Specifically, under the $L_\infty$ attack, each pixel can be perturbed up to a certain limit, allowing both the mean and the median to be easily altered when all pixels are modified. Conversely, under the $L_2$ and $L_0$ attacks, the number of perturbed pixels is restricted, making the median more resilient to disruption than the mean.

\end{itemize}

\subsection{Effect of Different Layers $K$}
\label{sec:mlp_ablation_num_layers_app}
To investigate the effect of  the number of iterations $K$ in the unrolled architecture, we present the results with increasing $K$ on combined models in the Table~\ref{tab:ablation_K_linear_mnist_combination} and Figure~\ref{fig:mlp_diff_layers_complete} in the appendix. As $K$ increases, the hybrid model can approach to the NRPM architecture, which makes the model more robust while slightly sacrificing a little bit natural accuracy.

\begin{table}[h!]
\caption{Ablation study on number of layers $K$ - MLP - MNIST  }
\label{tab:ablation_K_linear_mnist_combination}
\begin{center}
\begin{tabular}{rccccccccc}
\toprule
$K$ / $\epsilon$&0& 0.05& 0.1& 0.15& 0.2& 0.25& 0.3\\
\hline
\multicolumn{8}{c}{$\lambda=0.9$}\\
$K=0$&90.8 & 31.8 & 2.6 & 0.0 & 0.0 & 0.0 & 0.0\\
$K=1$&90.8 & 56.6 & 17.9 & 8.5 & 4.6 & 3.0 & 2.3\\
$K=2$&90.7 & 76.1 & 44.0 & 26.8 & 18.0 & 13.4 & 9.3\\
$K=3$&90.7 & 82.1 & 56.7 & 37.4 & 26.3 & 19.9 & 16.3\\
$K=4$&90.6 & 83.5 & 61.1 & 42.7 & 29.4 & 22.7 & 18.0\\
\hline
\multicolumn{8}{c}{$\lambda=0.8$}\\
$K=0$&90.8 & 31.8 & 2.6 & 0.0 & 0.0 & 0.0 & 0.0\\
$K=1$&90.4 & 67.1 & 30.8 & 17.4 & 10.6 & 6.5 & 4.5\\
$K=2$&90.4 & 81.1 & 56.2 & 36.2 & 26.0 & 19.3 & 15.5\\
$K=3$&90.3 & 81.4 & 61.8 & 42.9 & 31.8 & 23.7 & 19.5\\
$K=4$&90.3 & 82.5 & 64.3 & 45.6 & 33.1 & 25.3 & 22.0\\
\hline
\multicolumn{8}{c}{$\lambda=0.7$}\\
$K=0$&90.8 & 31.8 & 2.6 & 0.0 & 0.0 & 0.0 & 0.0\\
$K=1$&89.7 & 73.7 & 43.5 & 25.5 & 16.9 & 11.7 & 9.2\\
$K=2$&89.1 & 80.3 & 56.4 & 39.7 & 28.4 & 21.4 & 18.5\\
$K=3$&88.9 & 80.3 & 61.8 & 44.0 & 33.2 & 24.4 & 19.2\\
$K=4$&88.4 & 79.6& 61.3 & 43.6 & 33.9 & 26.3 & 21.4\\
\hline
\multicolumn{8}{c}{$\lambda=0.6$}\\
$K=0$&90.8 & 31.8 & 2.6 & 0.0 & 0.0 & 0.0 & 0.0\\
$K=1$&88.1 & 75.3 & 49.0 & 31.0 & 22.0 & 15.5 & 12.4\\
$K=2$&86.2 & 77.1 & 56.2 & 39.3 & 29.2 & 22.6 & 17.1\\
$K=3$&85.5 & 75.9 & 55.8 & 40.2 & 30.7 & 23.1 & 18.8\\
$K=4$&84.0 & 74.7 & 56.2 & 39.6 & 29.4 & 22.1 & 18.7\\
\hline
\multicolumn{8}{c}{$\lambda=0.5$}\\
$K=0$&90.8 & 31.8 & 2.6 & 0.0 & 0.0 & 0.0 & 0.0\\
$K=1$&84.1 & 74.4 & 50.0 & 31.9 & 22.8 & 18.1 & 14.3\\
$K=2$&80.3 & 71.0 & 53.5 & 37.0 & 26.9 & 21.5 & 16.4\\
$K=3$&79.0 & 69.6 & 53.7 & 38.1 & 29.2 & 23.2 & 19.5\\
$K=4$&77.7 & 66.6 & 51.2 & 37.9 & 29.7 & 23.7 & 19.8\\
\bottomrule
\end{tabular}
\end{center}
\end{table}

\begin{figure}[h!]
\vskip -0.1in
\begin{center}
\subfigure[$\lambda=0.5$] {\includegraphics[width=0.48\columnwidth]{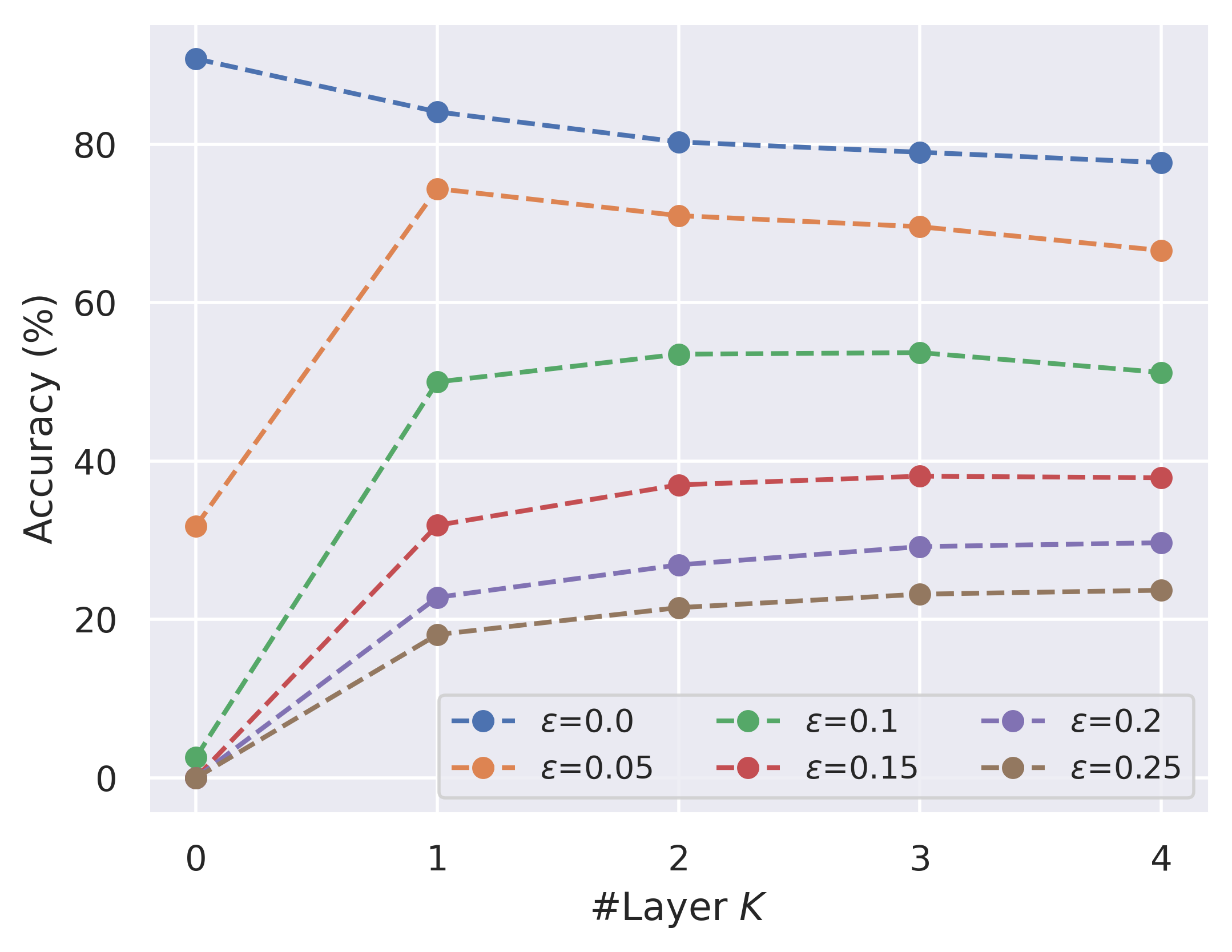}}
\subfigure[$\lambda=0.6$] {\includegraphics[width=0.48\columnwidth]{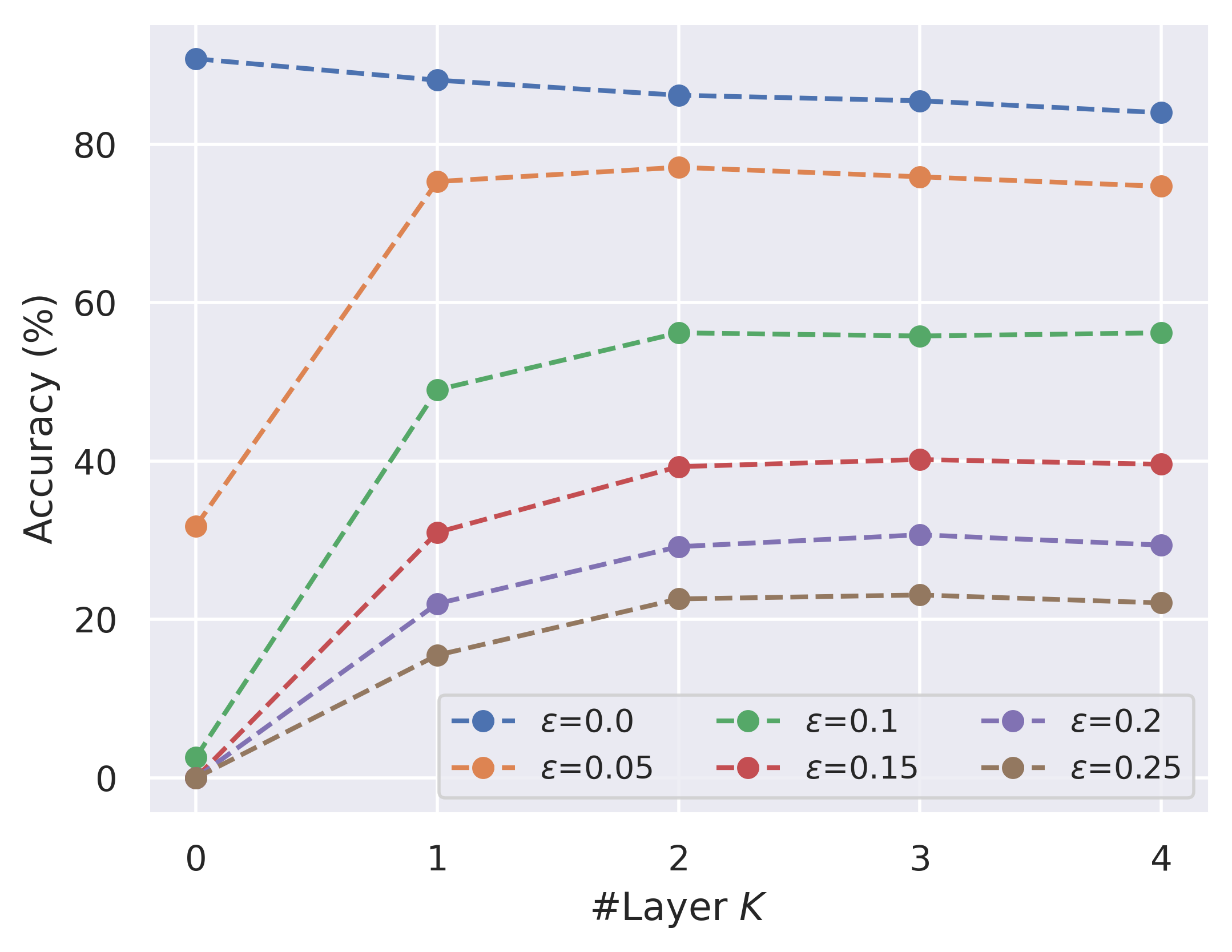}}
\subfigure[$\lambda=0.7$] {\includegraphics[width=0.48\columnwidth]{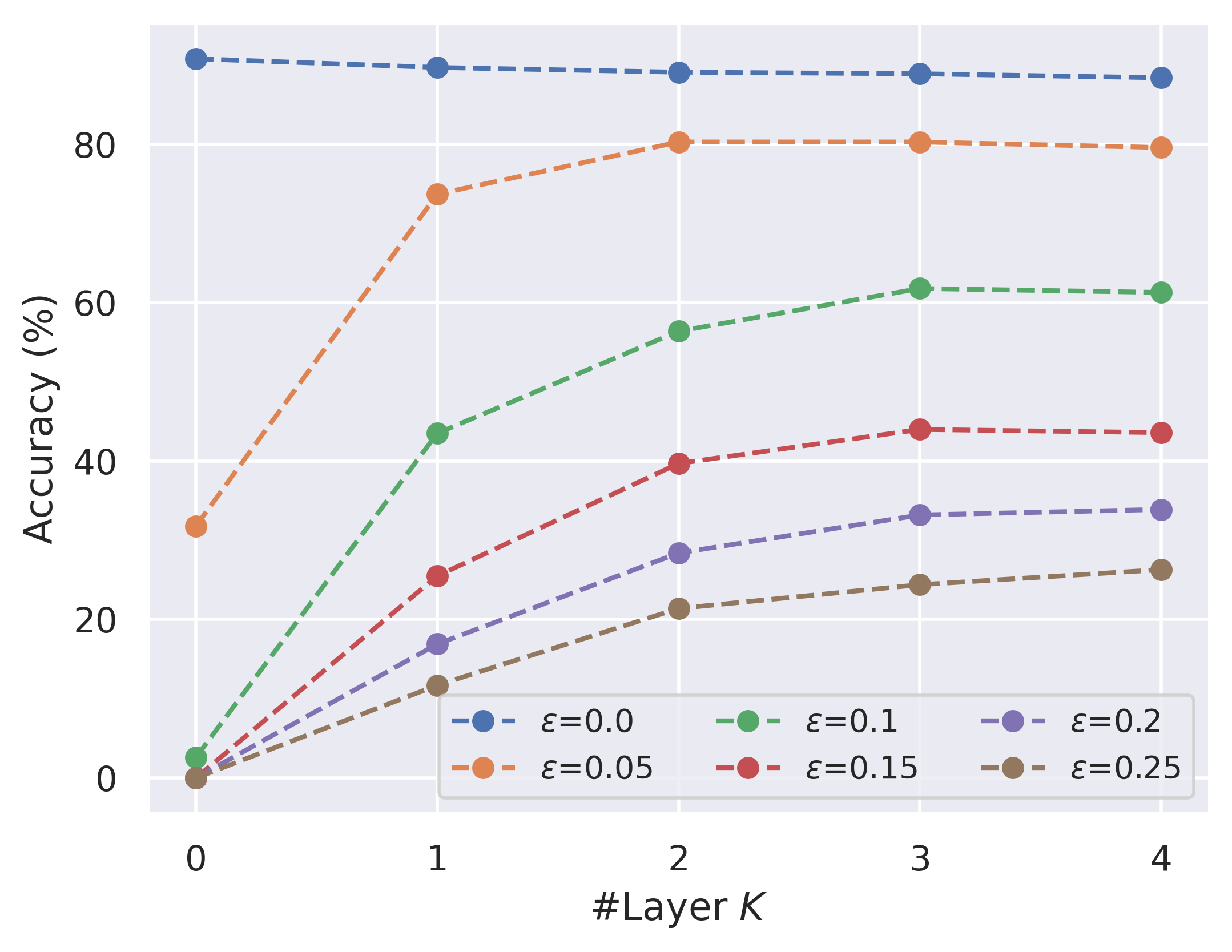}}
\subfigure[$\lambda=0.8$] {\includegraphics[width=0.48\columnwidth]{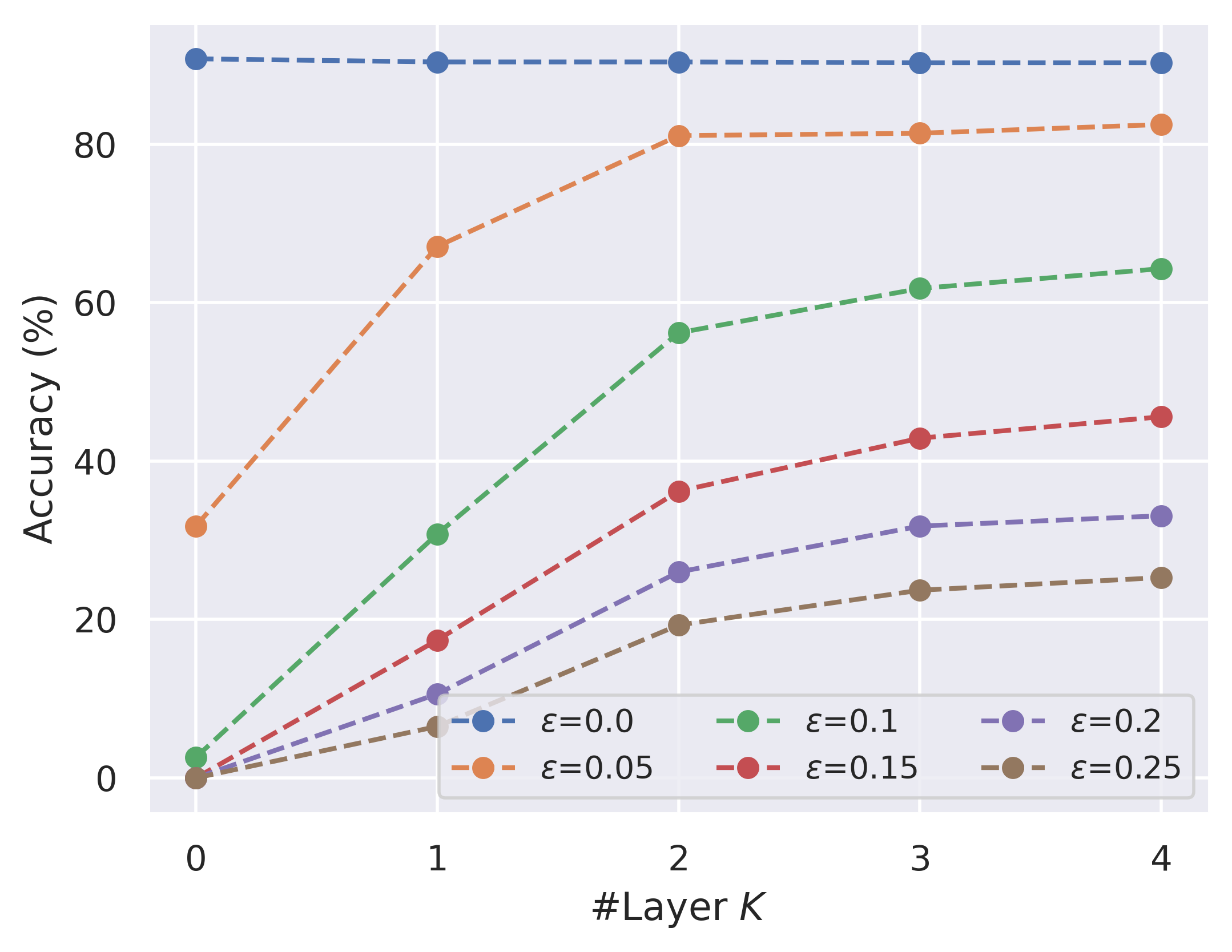}}
\subfigure[$\lambda=0.9$] {\includegraphics[width=0.48\columnwidth]{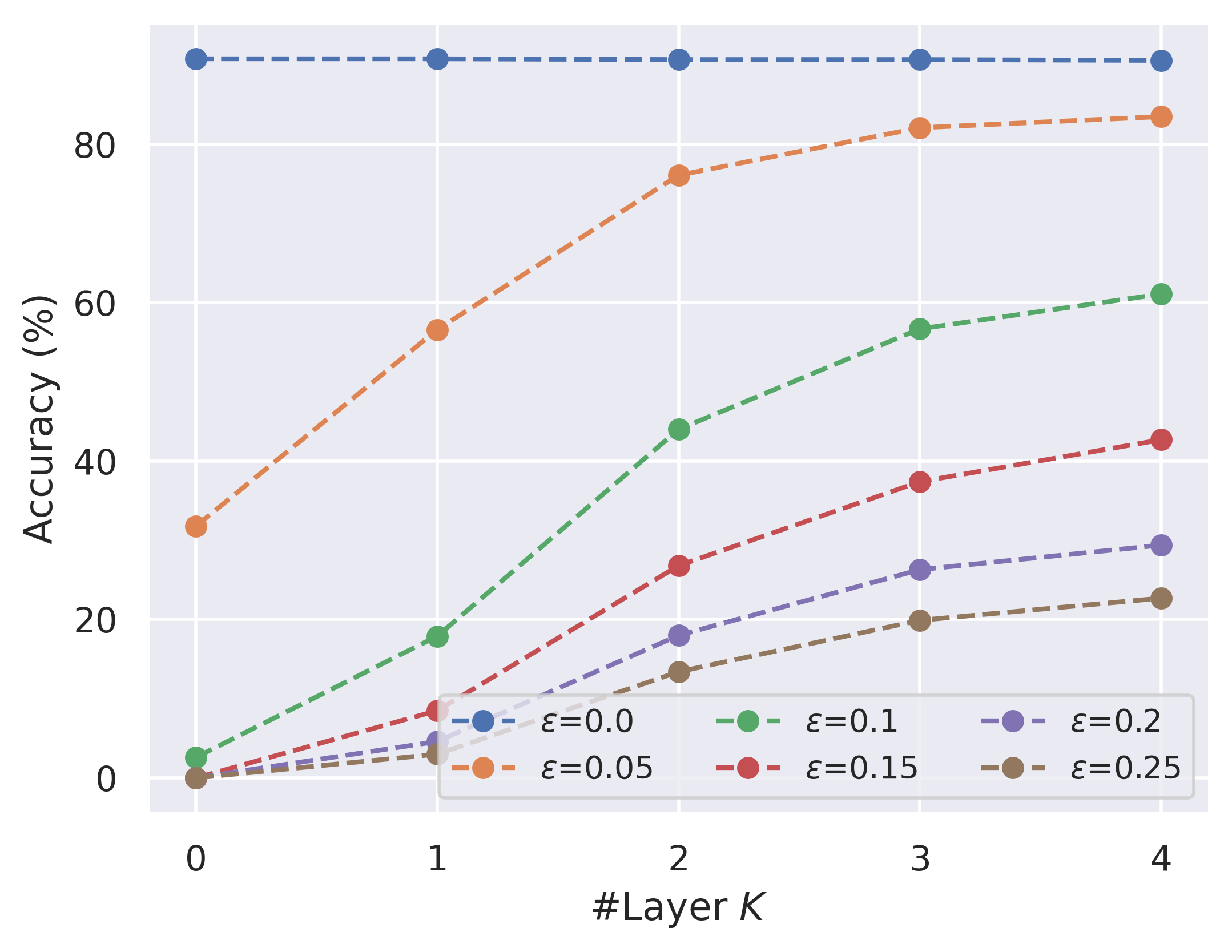}}

\caption{Effect of layers $K$}
\label{fig:mlp_diff_layers_complete}

\end{center}
\vskip -0.1in
\end{figure}

\newpage
~\\
\newpage
\section{Additional Experimental Results on LeNet}
\subsection{Visualization of hidden embedding}
We put several examples of the visualizations of hidden embedding in Figure~\ref{fig:visualization_lenet_all}.

\begin{figure}[h!]
    \centering

\includegraphics[scale=0.6]{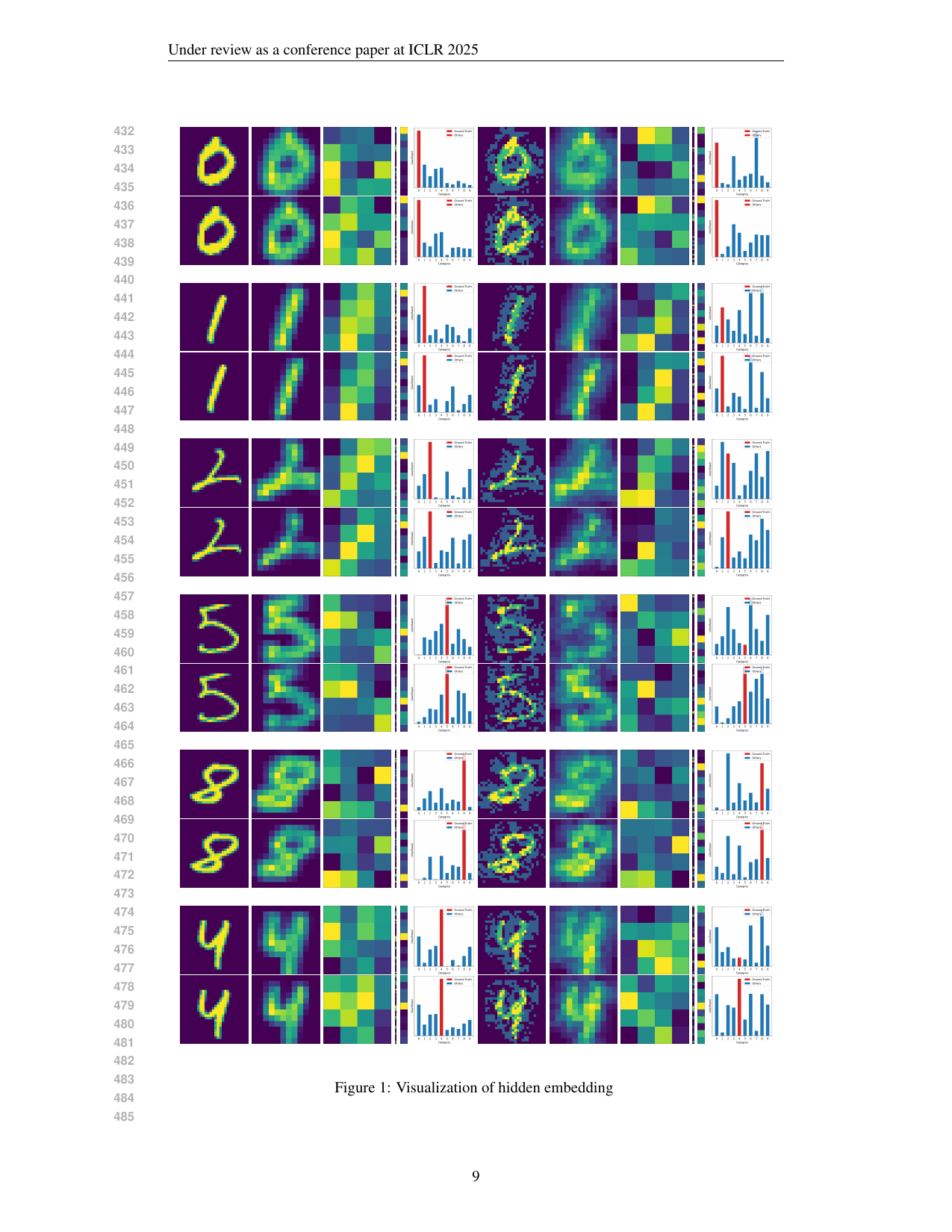}

    \caption{Visualization of hidden embedding
    }
    \label{fig:visualization_lenet_all}
\end{figure}

\subsection{Additional Datasets}
\label{sec:add_datasets_app}

Besides MNIST, we also conduct  the experiment  on SVHN and ImageNet10, and show the results under PGD in Table~\ref{tab:lenet_svhn}  and Table~\ref{tab:lenet_dmagenet10}, respectively.

\begin{table}[h!]
\caption{Robustness of LeNet under PGD on SVHN. }
\label{tab:lenet_svhn}

\centering
\begin{tabular}{c|ccccccccc}
\toprule
Model / Budget $\epsilon$&0/255&1/255&2/255&4/255&8/255&16/255&32/255&64/255\\\hline
LPM-LeNet            & 83.8&68.6&49.3&27.1&8.2&3.0& 1.9& 1.3            \\ 

NRPM-Lenet& 83.2&72.4&54.6&35.4&20.1&14.3&11.8& 7.9            \\ 
\bottomrule
\end{tabular}

\end{table}

\begin{table}[h!]
\caption{ Robustness of LeNet under PGD on ImageNet10. 
}
\centering
\begin{tabular}{c|ccccccccc}
\toprule
Model / Budget $\epsilon$&0/255&2/255&4/255&8/255\\\hline
LPM-LeNet             & 53.72&13.11&7.67&3.89          \\ 
NRPM-LeNet&53.55&15.06&13.68&12.50\\
\bottomrule
\end{tabular}
\label{tab:lenet_dmagenet10}
\end{table}

\newpage
\section{Additional Experimental Results on ResNets}

\subsection{Effect of backbone size}

\label{sec:resnet_ablation_backbone_size}
\begin{figure}[h!]
    \centering
    \includegraphics[width=0.6\linewidth]{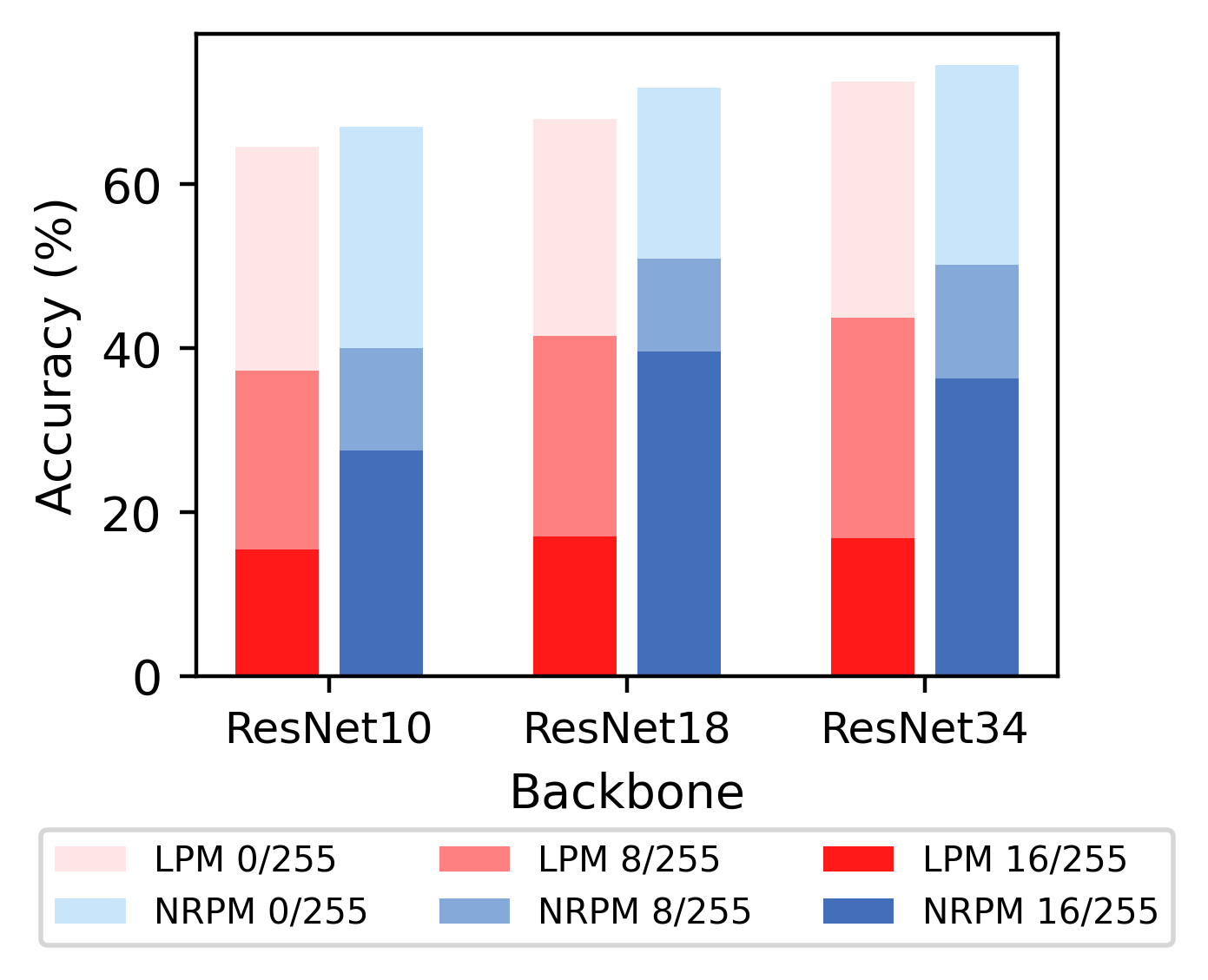}
    \caption{Ablation study on backbone size. (PGD)}
    \label{fig:resnet_diff_backbone_pgd}
\end{figure}

\begin{table}[h]
\centering
\caption{Ablation study on backbone size with narrow ResNet - CIFAR10 (PGD)}
\begin{tabular}{lcccc}
\toprule
\textbf{Model}/\textbf{PGD} & \textbf{Datural} & \textbf{8/255} & \textbf{16/255} & \textbf{32/255} \\ \hline
LPM-ResNet10             & 64.51          & 37.26          & 15.41           & 2.71        \\ 
NRPM-ResNet10 &67.00 &40.02&27.52&19.18 \\\hline
LPM-ResNet18&67.89&41.46&16.99&3.17\\
NRPM-ResNet18&71.79&50.89&39.58&30.03\\\hline
LPM-ResNet34&72.54&43.7&16.76&3.16\\
NRPM-ResNet34&74.55&50.12&36.29&25.54\\
\bottomrule
\end{tabular}

\label{tab:ablation_model_size_pgd}
\end{table}

\begin{table}[h]
\centering
\caption{Ablation study on backbone size with narrow ResNet - CIFAR10 (AutoAttack)}
\begin{tabular}{l|c|c|c|c}
\toprule
\textbf{Model}/\textbf{AA} & \textbf{Datural} & \textbf{8/255} & \textbf{16/255} & \textbf{32/255} \\ \hline
LPM-ResNet10      &64.51&28.70&5.50& 0.20        \\ 
NRPM-ResNet10 &67.00&40.40&30.10&12.80\\
\hline
LPM-ResNet18&67.89&30.50&7.20&0.40\\
NRPM-ResNet18&71.79&51.60&43.40&26.40\\
\hline
LPM-ResNet34&72.54&33.90&9.20&0.30\\
NRPM-ResNet34&74.55&54.20&45.00&28.90\\
\bottomrule
\end{tabular}

\label{tab:ablation_model_size_aa}
\end{table}

\end{document}